\def\preprint
\newcommand{\ve}[1]{\ensuremath{\mathbf{#1}}}
\newcommand{\vet}[2]{\ensuremath{\mathbf{#1}_{#2}}}
\newcommand{\vetl}[3]{\ensuremath{\mathbf{#1}_{#2}^{(#3)}}}
\newcommand{\veb}[2]{\ensuremath{\ve{#1}^{\{#2\}}}}
\newcommand{\ves}[2]{\ensuremath{\mathbf{#1}_{\mathbf{#2}}}}
\newcommand{\vesl}[3]{\ensuremath{\ves{#1}{#2}^{(#3)}}}
\newcommand{\vehs}[2]{\ensuremath{\mathbf{\hat{#1}}_\mathbf{#2}}}
\newcommand{\vehsb}[3]{\ensuremath{\vehs{#1}{#2}^{\{#3\}}}}
\newcommand{\vehtb}[3]{\ensuremath{\veht{#1}{#2}^{\{#3\}}}}
\newcommand{\veht}[2]{\ensuremath{\mathbf{\hat{#1}}_{#2}}}
\newcommand{\ma}[1]{\ensuremath{\mathbf{#1}}}
\newcommand{\mas}[2]{\ensuremath{\mathbf{#1}_{\mathbf{#2}}}}
\newcommand{\masl}[3]{\ensuremath{\mas{#1}{#2}^{(#3)}}}
\newcommand{\mahs}[2]{\ensuremath{\mathbf{\hat{#1}}_{\mathbf{#2}}}}
\newcommand{\mahsb}[3]{\ensuremath{\mahs{#1}{#2}^{\{#3\}}}}
\newcommand{\f}[1]{\ensuremath{f\left(#1\right)}}
\newcommand{\fopen}[1]{\ensuremath{f\left(#1\right.}}
\newcommand{\fclose}[1]{\ensuremath{\left.#1\right)}}
\newcommand{\fsigmoid}[1]{\ensuremath{\sigma\left(#1\right)}}
\newcommand{\fsigopen}[1]{\ensuremath{\sigma\left(#1\right.}}
\newcommand{\fsigclose}[1]{\ensuremath{\left.#1\right)}}
\newcommand{\ftanh}[1]{\ensuremath{\tanh\left(#1\right)}}
\newcommand{\ftanhopen}[1]{\ensuremath{\tanh\left(#1\right.}}
\newcommand{\ftanhclose}[1]{\ensuremath{\left.#1\right)}}
\newcommand{\g}[1]{\ensuremath{g\left(#1\right)}}
\newcommand{\rf}[1]{\ensuremath{r\left(#1\right)}}
\newcommand{\fb}[2]{\ensuremath{f^{\{#1\}}\left(#2\right)}}
\newcommand{\real}[1]{\ensuremath{\mathbb{R}^{{#1}}}}
\newcommand{\set}[1]{\ensuremath{\left\{#1\right\}}}
\newcommand{\On}[1]{\ensuremath{\mathcal{O} \left({#1}\right)}}
\newtheorem{theorem}{Theorem}
\newtheorem{lemma}{Lemma}
\begin{document}
\twocolumn[
\icmltitle{Approximating Stacked and Bidirectional Recurrent Architectures with the Delayed Recurrent Neural Network}

% It is OKAY to include author information, even for blind
% submissions: the style file will automatically remove it for you
% unless you've provided the [accepted] option to the icml2020
% package.

% List of affiliations: The first argument should be a (short)
% identifier you will use later to specify author affiliations
% Academic affiliations should list Department, University, City, Region, Country
% Industry affiliations should list Company, City, Region, Country

% You can specify symbols, otherwise they are numbered in order.
% Ideally, you should not use this facility. Affiliations will be numbered
% in order of appearance and this is the preferred way.
\icmlsetsymbol{equal}{*}

\begin{icmlauthorlist}
		\icmlauthor{Javier S.~Turek}{intel}	
		\icmlauthor{Shailee Jain}{csuta}		
		\icmlauthor{Vy A.~Vo}{intel}		
		\icmlauthor{Mihai Capot\u{a}}{intel}		
		\icmlauthor{Alexander G.~Huth}{csuta,neurouta}		
		\icmlauthor{Theodore L.~Willke}{intel}		
\end{icmlauthorlist}

\icmlaffiliation{intel}{Intel Labs, Hillsboro, Oregon, USA}	
\icmlaffiliation{csuta}{Department of Computer Science, The University of Texas at Austin, Austin, Texas, USA}		
\icmlaffiliation{neurouta}{Department of Neuroscience, The University of Texas at Austin, Austin, Texas, USA}

\icmlcorrespondingauthor{Javier S.~Turek}{javier.turek@intel.com}

% You may provide any keywords that you
% find helpful for describing your paper; these are used to populate
% the "keywords" metadata in the PDF but will not be shown in the document
\icmlkeywords{RNN, Bidirectional RNN, LSTM, Bidirectional LSTM, delay}

\vskip 0.3in
]

% this must go after the closing bracket ] following \twocolumn[ ...

% This command actually creates the footnote in the first column
% listing the affiliations and the copyright notice.
% The command takes one argument, which is text to display at the start of the footnote.
% The \icmlEqualContribution command is standard text for equal contribution.
% Remove it (just {}) if you do not need this facility.

\printAffiliationsAndNotice{}  % leave blank if no need to mention equal contribution

\begin{abstract}
%	ICML guidelines: 
%	Abstracts must be a single paragraph, ideally between 4--6 sentences long.
%	Gross violations will trigger corrections at the camera-ready phase.

Recent work has shown that topological enhancements to recurrent neural networks (RNNs) can increase their expressiveness and representational capacity. Two popular enhancements are stacked RNNs, which increases the capacity for learning non-linear functions, and bidirectional processing, which exploits acausal information in a sequence. In this work, we explore the delayed-RNN, which is a single-layer RNN that has a delay between the input and output. We prove that a weight-constrained version of the delayed-RNN is equivalent to a stacked-RNN. We also show that the delay gives rise to partial acausality, much like bidirectional networks. Synthetic experiments confirm that the delayed-RNN can mimic bidirectional networks, solving some acausal tasks similarly, and outperforming them in others. Moreover, we show similar performance to bidirectional networks in a real-world natural language processing task. These results suggest that delayed-RNNs can approximate topologies including stacked RNNs, bidirectional RNNs, and stacked bidirectional RNNs -- but with equivalent or faster runtimes for the delayed-RNNs.
\end{abstract}

\section{Introduction}
\label{sec:intro}

Recurrent neural networks (RNN) have successfully been used for sequential tasks like language modeling \cite{sutskever2011generating}, machine translation \cite{seq2seq}, and speech recognition \cite{amodei2016deep}. They approximate complex, non-linear temporal relationships by maintaining and updating an internal state for every input element. However, they face several challenges while modeling long-term dependencies, motivating work on variant architectures.

Firstly, due to the long credit assignment paths in RNNs, the gradients might vanish or explode \cite{bengio_frasconi_1994}. This has led to gated variants like the Long Short-term Memory (LSTM) \cite{LSTM} that can retain information over long timescales. 
Secondly, it is well known that deeper networks can more efficiently approximate a broader range of functions \cite{bengio2007_depth, bianchini_2014}. While RNNs are deep \textit{in time}, they are limited in the number of non-linearities applied to recent inputs. 

To increase depth, there has been extensive work on \textit{stacking} RNNs into multiple layers \cite{stacking_schmidhuber1992, bengio_depth}. In vanilla stacked RNNs, each layer applies a non-linearity and passes information to the next layer, while also maintaining a recurrent connection to itself. 
To effectively propagate gradients across the hierarchy, skip or shortcut connections can be used \cite{shortcut_raiko,stacking_graves}.
Alternatives like recurrent highway networks \cite{rhn} introduce non-linearities between timesteps through ``micro-ticks" \cite{act}. Pascanu et al.~\yrcite{pascanu_2014} increase depth by adding feedforward layers between state-to-state transitions. Gated feedback networks \cite{All2All} allow for layer-to-layer interactions between adjacent timesteps. All these variants thus introduce topological modifications to retain information over longer timescales and model hierarchical temporal dependencies.

Another development is the bidirectional RNN (Bi-RNN) \cite{BiRNN,BiLSTM}. While RNNs are inherently causal, Bi-RNNs model acausal interactions by processing sequences in both forward and backward directions. They achieve state-of-the-art performance on parts-of-speech tagging \cite{POSDualBiLSTM} and sentiment analysis \cite{sentimentanalysis}, demonstrating that some natural language processing (NLP) tasks benefit greatly from combining past and future inputs. 

The successes of these RNN architectural variants seem to derive from two common properties: depth and acausality.
In this paper we investigate the \textbf{delayed-recurrent neural network (d-RNN)}, an extremely simple variant that adds both depth and acausality to the RNN. The d-RNN is a single-layer RNN that imposes depth in time by delaying the output of the model. 
We analyze the d-RNN and prove that when it is constrained with sparse weights, the model is equivalent to a stacked RNN. 
Further, noting that the delay introduces acausal processing, we use a d-RNN to approximate bidirectional recurrent networks.
We show empirically that a d-RNN has the capability to solve some tasks similarly to stacked and bidirectional RNNs, and outperform them in others. Additionally, we show that even if the d-RNN approximation carries some error, this model can provide much faster runtimes than alternatives.

\section{Background}
\label{sec:background}
Given a sequential input $\set{\vet{x}{t}}_{t=1...T}, \vet{x}{t} \in \real{q}$, a single-layer RNN is defined by: 
\begin{align}
\veht{h}{t} &= \f{\mahs{W}{x} \vet{x}{t} + \mahs{W}{h} \veht{h}{t-1} + \vehs{b}{h}}, \label{eq:RNN_h} \\
\veht{y}{t} &= \g{\mahs{W}{o} \veht{h}{t} + \vehs{b}{o}}, \label{eq:RNN_y}
\end{align}
where \f{\cdot} and \g{\cdot} are element-wise activation function such as $\tanh$ and $\mathrm{softmax}$, $\veht{h}{t} \in \real{n}$ is the hidden state at timestep $t$ with $n$ units, and $\veht{y}{t} \in \real{m}$ is the network output.
Learned parameters include input weights \mahs{W}{x}, recurrent weights \mahs{W}{h}, bias term \vehs{b}{h}, output weights \mahs{W}{o}, and bias term \vehs{b}{o}. The initial hidden state is denoted \veht{h}{0}.

Stacked recurrent units are typically used to provide depth in RNNs \cite{stacking_schmidhuber1992, bengio_depth}. 
Based on Eq. \eqref{eq:RNN_h} and \eqref{eq:RNN_y}, a stacked RNN with $k$ layers is given by:
\begin{align}
\vetl{h}{t}{1} &= \f{\masl{W}{x}{1} \vet{x}{t} + \masl{W}{h}{1} \vetl{h}{t-1}{1} + \vesl{b}{h}{1}},\; i=1 \label{eq:sRNN_h1}\\
\vetl{h}{t}{i} &= \f{\masl{W}{x}{i} \vetl{h}{t}{i-1} + \masl{W}{h}{i} \vetl{h}{t-1}{i} + \vesl{b}{h}{i}},\;  i=2\dots k \label{eq:sRNN_hk}\\
\vet{y}{t} &= \g{\mas{W}{o} \vetl{h}{t}{k} + \ves{b}{o}}, \label{eq:sRNN_y}
\end{align}
where the activation function and parameterization follow the single-layer RNN. Separate weights and bias terms for each layer $i$ are given by \masl{W}{x}{i}, \masl{W}{h}{i}, and \vesl{b}{h}{i}. The hidden state for this layer at timestep $t$ is \vetl{h}{t}{i}. The stacked RNN has initial hidden state vectors $\vetl{h}{0}{1}\dots\vetl{h}{0}{k}$ corresponding to the $k$ layers. The hat operator is used for vectors and matrices in the single-layer RNN, while those without are for the stacked RNN.

\section{Delayed-Recurrent Neural Network}
\label{sec:dRNN}
One way to increase depth in RNNs is to stack recurrent layers, as suggested above. An alternative is to consider time as a means to increase depth within a single-layer RNN. However, single-layer RNNs are limited in the number of non-linearities applied to recent inputs: there is a single non-linearity between the most recent input \vet{x}{t} and its respective output \veht{y}{t}. Previous efforts \cite{pascanu_2014,act,rhn} overcame this limitation by incorporating intermediate non-linearities between input elements in different ways. These solutions add computational steps between elements in the sequence, greatly increasing runtime complexity.
In this work, we explore the delayed-recurrent neural network (d-RNN), in which effective depth is increased by introducing a ``delay''  between the input and output. 

Formally, we define a d-RNN to be a single-layer recurrent neural network as in Equations \eqref{eq:RNN_h} and \eqref{eq:RNN_y}, such that for any input \vet{x}{t} the respective output is obtained in \veht{y}{t+d}, i.e., $d$ timesteps later (Figure \ref{fig:dRNN}). We refer to $d$ as the ``delay'' of the network. The initial hidden state, \veht{h}{0}, for a d-RNN is initialized in the same manner as an RNN.

\begin{figure}[tb]
	\vskip 0.2in
	\begin{center}
	\centerline{\includegraphics[width=\columnwidth]{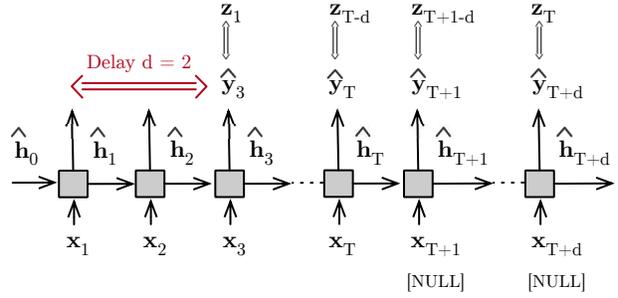}}
	\caption{A delayed-recurrent neural network (d-RNN) processing a sequence of $T$ elements. The output is delayed by $d=2$ timesteps. The first output element is in \veht{y}{3} and the last in \veht{y}{T+d}. The input sequence has $d$ additional elements, such as `[NULL]' symbols. During training, the outputs are compared with the $T$ elements of the labeled sequence $\{\vet{z}{j}\}_j$.}
	\label{fig:dRNN}
\end{center}
	\vskip -0.2in
\end{figure}
Delaying the output requires special considerations on the data that differ slightly from an RNN.
Input sequences need to have $T+d$ elements instead of $T$. Depending on the task being solved, this can be achieved by adding a ``null'' input element (e.g., the zero vector), or including $d$ additional elements in the input sequence.
When doing a forward pass over the d-RNN for inference, outputs from $t=1$ to $d$ are discarded as we expect the output for \vet{x}{1} to be at \veht{y}{1+d}. The output sequence goes from $\veht{y}{1+d} $ to \veht{y}{T+d}, and has $T$  elements.

Training loss is computed by comparing \vet{z}{t}, the expected output for input \vet{x}{t}, with \veht{y}{t+d}. Thus, gradients are back-propagated only from delayed outputs $\veht{y}{1+d}, \dots ,\veht{y}{T+d}$. In this way, any modified recurrent cell, such as an LSTM or GRU, can be trained with delayed output to obtain a delayed version of the architecture, e.g., d-LSTM or d-GRU.

\subsection{Complexity}
\label{sec:complexity}
Consider an RNN with $n$ units, where input elements have dimension $q$, and output elements have dimension $m$.
Computing one timestep of this RNN requires three matrix-vector multiplications with complexity \On{nq + nm+ n^2 }. Applying the non-linear functions \f{\cdot} and \g{\cdot} requires \On{m + n}. Hence, each step of this RNN has runtime complexity of \On{nq+nm + n^2}. For a sequence of length $T$, the overall computational effort is \On{T(nq+nm + n^2)}. For a d-RNN, the number of timesteps is increased by the delay $d$, giving total runtime complexity of \On{(T+d)(nq+nm + n^2)}. 

While the d-RNN incurs some cost, it is cheaper than alternative methods such as micro-steps \cite{act,rhn}, where additional timesteps are inserted between each pair of elements in both the input and output sequences. The runtime complexity for each micro-step is similar to an RNN step, leading the micro-step model complexity to grow with the number of micro-steps $d$ proportionally to \On{d T}. In contrast, the d-RNN model complexity only grows proportionally to \On{d + T}.
 
\subsection{Stacked RNNs are d-RNNs}
The mathematical structure of a stacked RNN is similar to a single-layer RNN with the addition of between-layer connections that add depth. Here we show that any stacked RNN can be flattened into a single-layer d-RNN that produces the exact sequence of hidden states and outputs. We exchange the depth from the between-layer connections with temporal depth applied through a delay in the output. To illustrate this, we rewrite the parameters of a single-layer RNN using the weights and bias terms of a $k$-layer stacked RNN from Equations \eqref{eq:sRNN_h1}-\eqref{eq:sRNN_y}:

\begin{align}
\mahs{W}{h}  &=   \begin{bmatrix} 
\masl{W}{h}{1}  & \ma{0} & & \cdots& & \ma{0} \\ 
\masl{W}{x}{2}  & \masl{W}{h}{2}  & & & & \\
\ma{0}& \ddots & \ddots & \ddots & & \vdots\\
\vdots& \ddots & \masl{W}{x}{i}  & \masl{W}{h}{i} & \ddots & \\
& & & \ddots & \ddots& \ma{0}\\
\ma{0}& \cdots& & \ma{0} & \masl{W}{x}{k}  & \masl{W}{h}{k}  
\end{bmatrix} \label{eq:dRNN_Wh}, \\ %%%%%%%
\vehs{b}{h} & =  \begin{bmatrix} 
\vesl{b}{h}{1} \\ 
\vdots \\
\vesl{b}{h}{k}
\end{bmatrix}, \qquad \qquad %%%%%%%
\mahs{W}{x}  =  \begin{bmatrix} 
\masl{W}{x}{1} \\ 
\ma{0} \\
\vdots \\
\ma{0}
\end{bmatrix}, \label{eq:dRNN-all} \\ %%%%%%%
\mahs{W}{o}  &= \begin{bmatrix} 
\ma{0} & 
\cdots  &
\ma{0} & \mas{W}{o} 
\end{bmatrix}, \; \quad %%%%%%%
\vehs{b}{o}  =  \ves{b}{o}, &\label{eq:dRNN_o} %%%%%%%
\end{align}
where $\mahs{W}{x}\in\real{kn \times q}$ are the input weights, $\mahs{W}{h}\in\real{kn \times kn}$ the recurrent weights,  $\vehs{b}{h}\in\real{kn}$ the biases, $\mahs{W}{o}\in\real{m \times kn}$ the output weights, and $\vehs{b}{o}\in\real{m}$ the output biases.

One can see from Eq.~\eqref{eq:dRNN_Wh}-\eqref{eq:dRNN_o} that each layer in the stacked RNN is converted into a group of units in the single-layer RNN. The block bidiagonal structure of the recurrent weight matrix \mahs{W}{h} makes the hidden state act as a buffer, where each group of units only receives input from itself and the previous group. Information processed through this buffering mechanism eventually arrives at the output after $k-1$ timesteps. In fact, the obtained model is a d-RNN with delay $d=k-1$ and sparsely constrained weights. Note that the d-RNN performs the same computations as the stacked version by trading depth \textit{in layers} for depth \textit{in time}.

Next, we define the following notation: for a vector $\ve{v} \in \real{kn}$ with $k$ blocks, the subvector $\veb{v}{i}\in\real{n}$ refers to its $i$th block following the partition from Equations \eqref{eq:dRNN_Wh}-\eqref{eq:dRNN_o}. We now prove that a d-RNN parametrized by Eq.~\eqref{eq:dRNN_Wh}-\eqref{eq:dRNN_o}  is exactly equivalent to the stacked RNN in Eqs. \eqref{eq:sRNN_h1}-\eqref{eq:sRNN_y}. The proof can be extended to more complex recurrent cells. We include a proof for LSTMs in the supplementary material.
\begin{theorem}
	\label{thm:dRNN}
	Given an input sequence $\set{\vet{x}{t}}_{t=1...T}$ and a stacked RNN with $k$ layers defined by Equations \eqref{eq:sRNN_h1}-\eqref{eq:sRNN_y} with activation functions \f{\cdot} and \g{\cdot}, and initial states $\{\vetl{h}{0}{i}\}_{i=1\dots k}$, the d-RNN with delay $d=k-1$, defined by Equations \eqref{eq:dRNN_Wh}-\eqref{eq:dRNN_o} and initialized with \veht{h}{0} such that $\vehtb{h}{i-1}{i}=\vetl{h}{0}{i}, \; \forall i=1\dots k$, produces the same output sequence but delayed by $k-1$ timesteps, i.e., $\veht{y}{t+k-1} = \vet{y}{t}$ for all $t=1\dots T$. Further, the sequence of hidden states at each layer $i$ are equivalent with delay $i-1$, i.e., $\vehtb{h}{t+i-1}{i}  = \vetl{h}{t}{i}$ for all $1 \le i \le k $ and $t\ge1$.
\end{theorem}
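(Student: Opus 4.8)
The plan is to expand the block-structured recurrence of the d-RNN into one update per block, observe that each block's update coincides with a stacked-RNN layer update up to a fixed time shift, and then close the argument by a double induction on the layer index and time.

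First I would read off the per-block dynamics implied by the parameters in \eqref{eq:dRNN_Wh}--\eqref{eq:dRNN_o}. Since $\mahs{W}{h}$ is block bidiagonal with diagonal blocks $\masl{W}{h}{i}$ and subdiagonal blocks $\masl{W}{x}{i}$, and $\mahs{W}{x}$ injects the input only into block $1$, the update \eqref{eq:RNN_h} decouples into
\begin{align*}
\vehtb{h}{t}{1} &= \f{\masl{W}{x}{1}\vet{x}{t} + \masl{W}{h}{1}\vehtb{h}{t-1}{1} + \vesl{b}{h}{1}},\\
\vehtb{h}{t}{i} &= \f{\masl{W}{x}{i}\vehtb{h}{t-1}{i-1} + \masl{W}{h}{i}\vehtb{h}{t-1}{i} + \vesl{b}{h}{i}}, \quad i\ge2.
\end{align*}
The only discrepancy with \eqref{eq:sRNN_h1}--\eqref{eq:sRNN_hk} is that block $i$ reads block $i-1$ at time $t-1$ rather than at time $t$; this one-step lag per layer is exactly what the delay absorbs.

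The statement to prove is the shifted identity $P(i,t):\ \vehtb{h}{t+i-1}{i} = \vetl{h}{t}{i}$ for $1\le i\le k$ and $t\ge0$, where $P(i,0)$ is supplied directly by the seeding hypothesis $\vehtb{h}{i-1}{i}=\vetl{h}{0}{i}$ (the theorem's range $t\ge1$ is then a subset). I would run the induction over $i+t$, or equivalently as an outer induction on $i$ with an inner induction on $t$, which is legitimate because $P(i,t)$ depends only on $P(i-1,t)$ and $P(i,t-1)$, both of strictly smaller index sum. For $i=1$ the block-$1$ update above is verbatim \eqref{eq:sRNN_h1}, so $P(1,t)$ follows from $P(1,t-1)$. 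For $i\ge2$, substituting $t\mapsto t+i-1$ into the block-$i$ update produces the terms $\vehtb{h}{t+i-2}{i-1}$ and $\vehtb{h}{t+i-2}{i}$; writing $t+i-2=t+(i-1)-1$ identifies the first with $P(i-1,t)$, and $t+i-2=(t-1)+i-1$ identifies the second with $P(i,t-1)$. Feeding in the two inductive hypotheses collapses the pre-activation to $\masl{W}{x}{i}\vetl{h}{t}{i-1}+\masl{W}{h}{i}\vetl{h}{t-1}{i}+\vesl{b}{h}{i}$, which is precisely the argument of \eqref{eq:sRNN_hk}; since $\f{\cdot}$ is applied element-wise and is shared by both models, equal arguments give equal states and $P(i,t)$ holds.

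I expect the main obstacle to be the index bookkeeping rather than any analytic content. The delays accumulate to $i-1$ at layer $i$, so the two state terms entering the block-$i$ update share a common absolute time $t+i-2$ yet correspond to different stacked-RNN times ($t$ for layer $i-1$, $t-1$ for layer $i$); keeping these aligned is the crux, and the scheme only starts correctly because the seeding $\vehtb{h}{i-1}{i}=\vetl{h}{0}{i}$ places into each block exactly the buffer contents it needs before its first ``live'' update. The output claim is then a one-line corollary: taking $i=k$ in $P(k,t)$ and using the constraints on $\mahs{W}{o}$ and $\vehs{b}{o}=\ves{b}{o}$ from \eqref{eq:dRNN_o} in \eqref{eq:RNN_y} gives $\veht{y}{t+k-1}=\g{\mas{W}{o}\vehtb{h}{t+k-1}{k}+\ves{b}{o}}=\g{\mas{W}{o}\vetl{h}{t}{k}+\ves{b}{o}}=\vet{y}{t}$ for $t=1,\dots,T$.
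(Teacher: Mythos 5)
Your proposal is correct, and while it rests on the same core identity as the paper's proof, the induction is organized in a genuinely different and more economical way. Both arguments establish the shifted statement $P(i,t):\ \vehtb{h}{t+i-1}{i}=\vetl{h}{t}{i}$ from the block structure of \eqref{eq:dRNN_Wh}--\eqref{eq:dRNN_o}, but the paper inducts on the stacked time $t$ alone, with the hypothesis that $\vehtb{h}{s+j-1}{j}=\vetl{h}{s}{j}$ holds for all layers $j$ and all $s\le T-1$. Since that hypothesis does not cover layer $i-1$ at the \emph{same} stacked time $T$, each step of the paper's proof must unroll the nested compositions of \f{\cdot} all the way down to block $1$ and input \vet{x}{T}, substitute the hypothesis at time $T-1$ everywhere, and then re-collapse the nesting layer by layer using \eqref{eq:sRNN_h1}--\eqref{eq:sRNN_hk} --- and its base case at $t=1$ repeats the same lengthy unrolling against the initialization. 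You instead isolate the per-block d-RNN recurrence as a standalone statement (the paper never writes it explicitly) and strengthen the induction to a well-founded one on $i+t$, so that $P(i,t)$ follows from $P(i-1,t)$ and $P(i,t-1)$ by a single two-term substitution into the block-$i$ update, with the base cases $P(i,0)$ supplied for free by the seeding hypothesis $\vehtb{h}{i-1}{i}=\vetl{h}{0}{i}$. This buys a proof of a few lines in place of several pages, and it would transfer mechanically to the d-LSTM extension in the supplement, where the paper again resorts to long unrolling over all four gates; what the paper's organization buys is only that its hypothesis lives at a single common stacked time, mirroring a layer-synchronous forward pass. One pedantic caveat: your range ``$t\ge 0$'' should be read as $0\le t\le T$, since $\vetl{h}{t}{i}$ is undefined beyond $T$ and block $1$ of the d-RNN at absolute times beyond $T$ sees only the padding inputs; tracing dependencies shows the padding never enters any $P(i,t)$ with $t\le T$, so the theorem's conclusions are unaffected.
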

\begin{proof}
	See Section 1 of the supplementary material.
\end{proof}

Theorem \ref{thm:dRNN} makes an assumption that  \veht{h}{0}  in the d-RNN can be initialized such that it achieves $\vehtb{h}{i-1}{i}=\vetl{h}{0}{i}$ for all blocks. 
Lemma \ref{thm:initializationRNN} below implies that initialization for the d-RNN with constrained weights can always be computed from the stacked RNN.
The intuition behind it is that we can compute recursively from $\vehtb{h}{i-1}{i}=\vetl{h}{0}{i}$ to \vehtb{h}{0}{i} for block $i$, while inverting the activation function. All commonly used activation functions are surjective, thus it is enough to know the right-inverse of the activation function \f{\cdot} (see proof of Lemma). For example, when \f{\cdot} is the ReLU, the right-inverse is the identity function $\rf{d} = d$.
\begin{lemma}
	\label{thm:initializationRNN}
	Let $f:\real{}\rightarrow D$ be a surjective activation function that maps elements in \real{} to elements in interval $D$. Also, let $\vetl{h}{0}{i} \in D^{n}$ for $i=1\dots k$ be the hidden state initialization for a stacked RNN with $k$ layers as defined in \eqref{eq:sRNN_h1}-\eqref{eq:sRNN_hk}.
	Then, there exists an initial hidden state vector $\veht{h}{0} \in \real{kn}$ for a single-layer network in Equations \eqref{eq:dRNN_Wh}-\eqref{eq:dRNN-all} such that $\vehtb{h}{i-1}{i}=\vetl{h}{0}{i} \; \forall i=1\dots k$.
\end{lemma}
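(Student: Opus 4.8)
The plan is to construct \veht{h}{0} one block at a time, recovering the entries of each block by unrolling the d-RNN recurrence backwards in time and peeling off the activation with a right-inverse \rf{\cdot}. First I would check that the statement is well posed. Because \mahs{W}{h} in \eqref{eq:dRNN_Wh} is block bidiagonal and \mahs{W}{x} feeds the input only into the first block, the influence of any \vet{x}{t} reaches block $i$ no sooner than timestep $t+i-1$; hence at timestep $i-1$ no input has yet reached block $i$, and \vehtb{h}{i-1}{i} is a function of \veht{h}{0} and the fixed parameters alone. The constraints $\vehtb{h}{i-1}{i}=\vetl{h}{0}{i}$ therefore form a closed algebraic system in \veht{h}{0} that is independent of the input sequence.

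I would then solve this system by induction on the block index $i$. For $i=1$ the constraint is $\vehtb{h}{0}{1}=\vetl{h}{0}{1}$, fixing the first block of \veht{h}{0} outright. Assuming blocks $1,\dots,i-1$ have been fixed, the relevant input-free warm-up states of the lower blocks are determined by forward simulation, since each depends only on itself and on the block preceding it. For block $i$ I would start from the target $\vehtb{h}{i-1}{i}=\vetl{h}{0}{i}$ and march backwards. The warm-up update reads
\[
\vehtb{h}{t}{i}=\f{\masl{W}{x}{i}\vehtb{h}{t-1}{i-1}+\masl{W}{h}{i}\vehtb{h}{t-1}{i}+\vesl{b}{h}{i}},
\]
and, since \f{\cdot} is surjective onto $D$, I strip the nonlinearity with its right-inverse and solve
\[
\masl{W}{h}{i}\vehtb{h}{t-1}{i}=\rf{\vehtb{h}{t}{i}}-\masl{W}{x}{i}\vehtb{h}{t-1}{i-1}-\vesl{b}{h}{i}
\]
for \vehtb{h}{t-1}{i}, using the already-known block-$(i-1)$ state \vehtb{h}{t-1}{i-1}. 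Iterating from $t=i-1$ down to $t=1$ recovers \vehtb{h}{0}{i} and closes the induction.

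The hard part is guaranteeing that every backward step is actually solvable, and this is where I expect the real difficulty — and perhaps a missing hypothesis — to lie. Two conditions must hold at each step. First, the linear system above must be solvable for \vehtb{h}{t-1}{i}, which requires \masl{W}{h}{i} to be invertible (or at least its right-hand side to lie in the range of \masl{W}{h}{i}); surjectivity of \f{\cdot} disposes only of the nonlinearity, not of this linear solve. Second, and more delicately, whenever $t-1\ge1$ the reconstructed state \vehtb{h}{t-1}{i} must land back inside $D^{n}$, because the next backward step applies \rf{\cdot} to it. If \f{\cdot} is onto all of \real{} this is automatic, but for one-sided or bounded activations — such as the ReLU, whose right-inverse is the identity, or $\tanh$ — the linear solve can return a vector outside $D^{n}$, for which no admissible pre-image exists, so certain targets \vetl{h}{0}{i} become unreachable. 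I would therefore expect a clean proof to need \masl{W}{h}{i} invertible together with a guarantee that the intermediate buffer states remain in $D^{n}$; notably, for delay $d=k-1=1$ the second issue vanishes, since the only quantity reconstructed is the unconstrained initial state $\vehtb{h}{0}{i}\in\real{n}$.
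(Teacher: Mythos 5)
Your construction is essentially the paper's own proof: the paper likewise fixes $\vehtb{h}{0}{1}=\vetl{h}{0}{1}$ and then, for each block $i$, marches backwards from the constraint $\vehtb{h}{i-1}{i}=\vetl{h}{0}{i}$, stripping the activation with a right-inverse \rf{\cdot} (supplied by surjectivity) and solving the resulting linear system one timestep at a time, using the already-reconstructed block-$(i-1)$ states, until it reaches $\vehtb{h}{0}{i}={\masl{W}{h}{i}}^{\dagger}\left[\rf{\vehtb{h}{1}{i}}-\masl{W}{x}{i}\vehtb{h}{1}{i-1}-\vesl{b}{h}{i}\right]$, where ${}^{\dagger}$ denotes the Moore--Penrose pseudoinverse. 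The one substantive difference is how the two solvability conditions you isolate are handled: the paper handles neither. Writing a pseudoinverse does not discharge the linear solve --- ${\masl{W}{h}{i}}^{\dagger}$ yields an exact solution only when the right-hand side lies in the range of \masl{W}{h}{i}, and this is a real restriction: if, say, $\masl{W}{h}{i}=\ma{0}$, then $\vehtb{h}{i-1}{i}$ is entirely determined by the lower blocks and a generic target $\vetl{h}{0}{i}\in D^{n}$ is unreachable, so some rank or invertibility hypothesis is genuinely needed for the lemma as stated. Likewise, the paper applies \rf{\cdot} to the reconstructed intermediate states $\vehtb{h}{t}{i}$ for $1\le t\le i-2$ without verifying that they lie in $D^{n}$, which is precisely your second objection: for bounded or one-sided activations such as $\tanh$ or the ReLU, once $k\ge 3$ the backward solve can produce a vector outside $D^{n}$, which has no preimage under \f{\cdot}, so the target initialization is unreachable; your observation that delay $d=1$ is immune matches the fact that the paper's $i=2$ base case is the only step requiring no such membership check. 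In short, your proposal follows the paper's route exactly, and the ``missing hypotheses'' you anticipated are genuine gaps in the paper's own argument rather than artifacts of your approach.
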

\begin{proof}
	See Section 2 of the supplementary material.
\end{proof}

From this theorem we see that $k$-layer stacked RNNs can be perfectly expressed as a single-layer d-RNN. In this case, the d-RNN has a specific sparsity structure in its weight matrices that is not present in the generic RNN or d-RNN.
As the stacked RNN and the d-RNN with sparsely constrained weights models are equivalent, there is no difference in favor of which one is used in practice, and their runtime complexities are the same\footnote{Their runtime complexities are the same as we can always obtain a version with reduced computational effort for one model by executing the other and translating the result.}. Moreover, they are interchangeable using the weight matrix definitions in Equations \eqref{eq:dRNN_Wh}-\eqref{eq:dRNN_o}.

\subsubsection{Relation to Other Topologies} 
\begin{figure*}[tb]
	\vskip 0.2in
\begin{center}
\centerline{\includegraphics[width=0.9\textwidth]{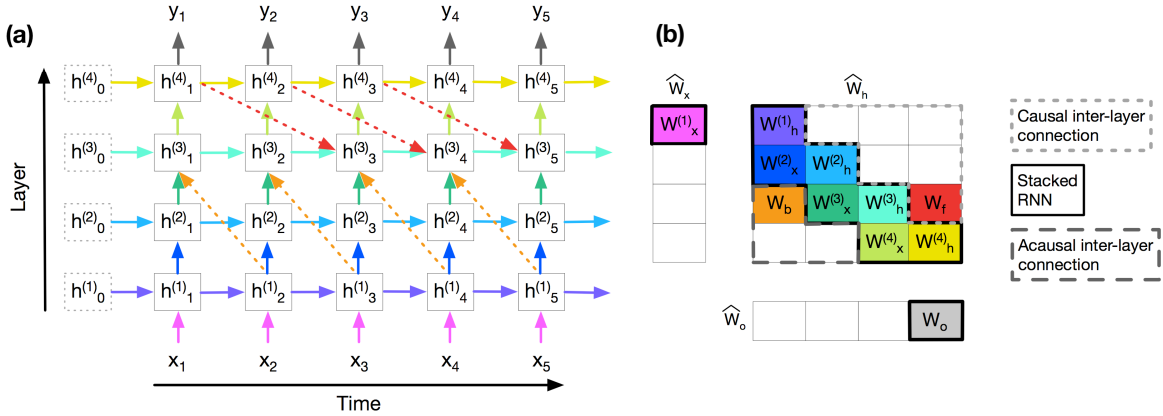}}
	\caption{A stacked RNN is equivalent to a single-layer d-RNN under the given sparse weight constraints. The d-RNN produces the same representations as the stacked network. (a) Stacked RNN with $k=4$ layers where connections show the different weight parameters. (b) Weights of the d-RNN that are equivalent to connections in the stacked RNN.}
	\label{fig:WeightMatrix}
\end{center}
	\vskip -0.2in
\end{figure*}
Suppose one takes a weight constrained d-RNN  and adds non-zero elements to regions not populated by weights in Eq.~\eqref{eq:dRNN_Wh}. These non-zero weights do not correspond to existing connections in the stacked RNN. So what do they correspond to?

To explore this question we illustrate a 4-layer stacked RNN in Figure~\ref{fig:WeightMatrix}~(a). Here, solid arrows show the standard stacked RNN connections. The d-RNN weight matrices \mahs{W}{h}, \mahs{W}{x}, and \mahs{W}{o} are shown in Figure~\ref{fig:WeightMatrix}~(b), where the color of each block matches the corresponding arrow in Figure~\ref{fig:WeightMatrix}~(a). 
Blocks on the main diagonal of \mahs{W}{h} connect groups of units to themselves recurrently, while blocks on the subdiagonal correspond to connections between layers in the stacked RNN. More generally, block $(i,j)$ in \mahs{W}{h} corresponds to a connection from $\vetl{h}{t}{j}$ to $\vetl{h}{t+j-i+1}{i}$ in the stacked RNN. Thus, blocks in the lower triangle (i.e.~$i>j+1$) correspond to connections that point backwards in time, and from a lower layer to a higher layer. For example, the orange block $(3,1)$ in Figure~\ref{fig:WeightMatrix}~(b) (and the dashed orange lines in Figure~\ref{fig:WeightMatrix}~(a)) connects layer 1 at time $t$ to layer 3 at time $t-1$. 
Conversely, blocks in the upper triangle (i.e.~$j>i$) point forward in time and from a higher layer to a lower layer. For example, the red block $(3,4)$ in Figure~\ref{fig:WeightMatrix}~(b) (and the dashed red lines in Figure~\ref{fig:WeightMatrix}~(a)) connects layer 4 at time $t$ to layer 3 at time $t+2$.

Thus we see that adding weights to empty regions in the weight constrained d-RNN can mimic the behavior of many stacked recurrent architectures that have previously been proposed. Among others, it can approximate the IndRNN \cite{IndRNN}, td-RNN \cite{tdRNN},  skip-connections \cite{stacking_graves}, and all-to-all layer networks \cite{All2All}. 
Simply removing the constraints on \mahs{W}{h} during training will enable a d-RNN to learn the necessary stacked architecture. However, unlike an ordinary RNN, this requires the output to be delayed based on the desired stacking depth. Further, while the single-layer network has the same total number of units as the corresponding stacked RNN, relaxing constraints on \mahs{W}{h} would mean that the single-layer would have many more parameters.

\subsection{Approximating Bidirectional RNNs}
\label{sec:ApproxBiRNN}
We previously showed how a d-RNN can be made equivalent to a stacked RNN by constraining its weight matrices. Without these constraints, the d-RNN has the ability to peek at ``future'' inputs: it computes the delayed output for time $t$ at \veht{y}{t+d} using also the inputs $\vet{x}{t+1},\dots,\vet{x}{t+d}$ that are beyond timestep $t$.
A similar idea was used in the past as a baseline for bidirectional recurrent neural networks (Bi-RNNs) \cite{BiRNN,BiLSTM}. These papers showed that Bi-RNNs were superior to d-RNNs for relatively simple problems, but it is not clear that this comparison holds true for problems that require more non-linear solutions. If a recurrent network can compute the output for time $t$ by exploiting future input elements, what conditions are necessary to approximate its Bi-RNN counterpart? Moreover, can the d-RNN obtain the same results? And, given these conditions, is there a benefit to using the d-RNN instead of the Bi-RNN?

Figure~\ref{fig:nonlinearities} shows the number of non-linear transformations that each network can apply to any input element before computing the output at timestep $t_0$. The generic RNN processes only past inputs ($t \le t_0$), and the number of non-linearities decreases for inputs closer to timestep $t_0$. The Bi-RNN has identical behavior for causal inputs but is augmented symmetrically for acausal inputs.
In contrast, the d-RNN has similar behavior for the causal inputs but with a higher number of non-linearities. This trend continues for the first $d$ acausal inputs with a decreasing number of non-linearities until the number reaches zero at $t=t_0+d+1$.
In order for a d-RNN to have at least as many non-linearities as a Bi-RNN for every element in a sequence, it would need a delay that is twice the sequence length. However, a d-RNN could beat a Bi-RNN when the non-linear influence of nearby acausal inputs on the learned function is larger than elements farther in the future. In these cases, stacking Bi-RNNs would be needed to achieve the same objective.

\begin{figure}[tb]
	\vskip 0.2in
	\begin{center}
	\centerline{\includegraphics[width=0.8\columnwidth]{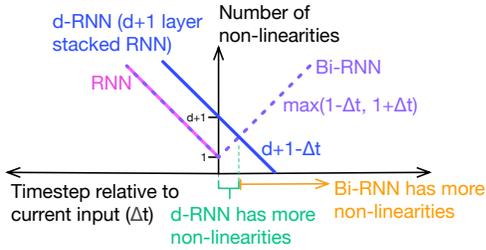}}
	\caption{Number of non-linearities that can be applied to past and future sequence elements with respect to current input ($\Delta t$=0). The d-RNN only sees $d$ steps into the future.}
	\label{fig:nonlinearities}
\end{center}
	\vskip -0.2in
\end{figure}

Using a d-RNN to approximate a Bi-RNN can also decrease computational cost. For a sequence of length $T$, a stacked Bi-RNN needs to compute both forward and backward RNNs for each layer before it can compute the next one. This synchronization requirement hinders parallelization and increases runtime. In contrast, the forward-pass for the d-RNN takes $T+d$ steps, but does not suffer from synchronization. Thus in highly parallel hardware such as CPUs and GPUs, the runtime of a $k$-layer stacked Bi-RNN should be at least $k$ times slower than an RNN or d-RNN.
Beyond computational costs, d-RNNs can also be used where it is critical to output values in (near) realtime applications~\cite{TimeSeriesAnomalyDetection,DeepVoice}. A d-RNN requires only the last $d$ elements and a hidden state to compute a new value, whereas bidirectional architectures need to process an entire backward pass of the sequence.

\section{Experiments}
We test the capabilities of the d-RNN in four experiments designed to shed more light on the relationships between d-RNNs, RNNs, Bi-RNNs, and stacked networks. For this purpose, the RNN implementation we use is a LSTM network, which avoids vanishing gradients and retains more information over long periods. The delayed LSTM networks are denoted as d-LSTMs. To train each d-LSTM, the input sequences are padded at the end with zero-vectors and loss is computed by ignoring the first ``delay'' timesteps, as explained in Section \ref{sec:dRNN}.
All models are trained using the Adam optimization algorithm \cite{ADAM} with learning rate $0.001$, $\beta_1=0.9$, and $\beta_2=0.999$.
During training, the gradients are clipped \cite{Pascanu_bengio2013} at 1.0 to avoid explosions.
Experiments were implemented using PyTorch 1.1.0 \cite{PyTorch}, and code can be found at \url{http://www.anonymous.com/anonymous}.

\subsection{Sequence Reversal}
First, we propose a simple test to illustrate how the d-LSTM can interpolate between a regular LSTM and Bi-LSTM.
In this test we require the recurrent architectures to output a sequence in reverse order while reading it, i.e.~$\vet{y}{t}=\vet{x}{T-t+1}$ for $t=1,..,T$.
Solving this task perfectly is only possible when a network has acausal access to the sequence. Moreover, depending on how many acausal elements a network can access, it is possible to  analytically calculate the expected maximum performance that the network can achieve. Given a sequence of length $T$ with elements from a vocabulary $\{1,...,V\}$, a causal network such as the regular LSTM can output the second half of the elements correctly and guess those in the first half with probability $\nicefrac{1}{V}$.
When a network has access to $d$ acausal elements it can start outputting correct elements before reaching the halfway point, and can achieve an expected true positive rate (TPR) of 
$\tfrac{1}{2} \left(1 + \tfrac{1}{V}\right) + \left\lfloor \tfrac{d+1}{2} \right\rfloor \tfrac{1}{T}\left(1 - \tfrac{1}{V}\right).$
We generate data sequences of length $T=20$ by uniformly sampling integer values between 1 and $V=4$. The training set consists of 10,000 sequences, the validation set 2,000, and test set 2,000. Output sequences are the input sequences reversed. Values in the input sequences are fed as one-hot vector representations. All networks output via a linear layer with a softmax function that converts to a vector of $V$ probabilities to which cross-entropy loss is applied. The LSTM and d-LSTM networks have 100 hidden units, while the Bi-LSTM has 70 in each direction in order to keep the total number of parameters constant. We use batches of 100 sequences and train for 1,000 epochs with early stopping after 10 epochs and $\Delta=$1e-3.

Figure \ref{fig:reverse} shows accuracy on this task as a function of the applied delay. The LSTM  does not use acausal information and is unable to reverse more than half of the input sequence. Conversely, the Bi-LSTM has full access to every element in the sequence, and can perfectly solve the task. For the d-LSTM network, performance increases as we increase the delay in the output, reaching the same level as the Bi-LSTM once the network has access to the entire sequence before being required to produce any output (delay 19). This experiment demonstrates that the d-LSTM can ``interpolate'' between LSTM and Bi-LSTM by choosing a delay that ranges between zero and the length of the input sequence.

\begin{figure}[tb]
	\vskip 0.2in
	\begin{center}
	\centerline{\includegraphics[trim=15 18 15 20, clip,width=0.95\columnwidth]{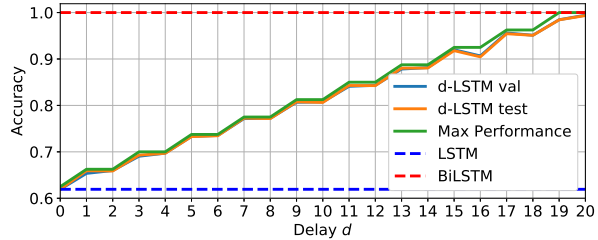}}
	\caption{Comparison of different delay values for a d-LSTM network for reversing a sequence. LSTM and Bi-LSTM networks are shown for reference. The network is capable of achieving the expected statistical bound. The d-LSTM with highest delay is capable of solving the task as well as the Bi-LSTM.}
	\label{fig:reverse}
	\end{center}
	\vskip -0.2in
\end{figure}

\subsection{Evaluating Network Capabilities}
\label{sec:PhasesExperiment}
\begin{figure*}[tb]
	\vskip 0.2in
	\begin{center}
	\includegraphics[trim=20 15 0 35, clip,width=\textwidth]{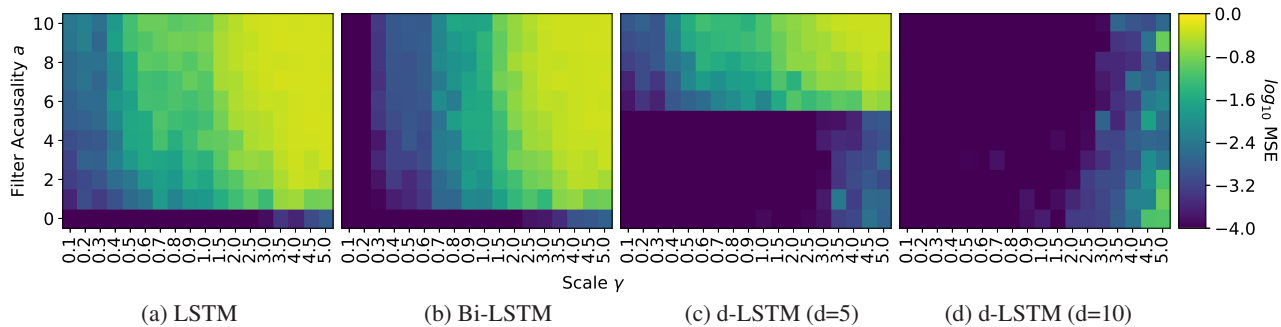}
	\footnotesize{
		\begin{tabular}{cccc}
			\hspace{0.5in}	(a) LSTM \hspace{0.5in} & \hspace{0.25in} (b) Bi-LSTM \hspace{0.25in} &\hspace{0.25in}(c) d-LSTM (d=5) \hspace{0.25in}&\hspace{0.25in}(d) d-LSTM (d=10) \hspace{0.5in}
		\end{tabular}
	}
	\caption{Error maps for the sine function experiment with different degrees of non-linearity (horizontal axis) and amounts of acausality of the filter (vertical axis). Tested architectures: (a) LSTM, (b) Bi-LSTM, (c) d-LSTM with delay=5, and (d) d-LSTM with delay=10. Dark blue regions depict perfect filtering (low error), transitioning to yellow regions with high error.}
	\label{fig:sinus}
	\end{center}
	\vskip -0.2in
\end{figure*}
The first experiment showed how a d-LSTM with sufficient delay can mimic a Bi-LSTM.
In the next experiment we aim at comparing how well d-LSTM, LSTM, and Bi-LSTM networks approximate functions with varying degrees of non-linearity and acausality.

Drawing inspiration from \cite{BiRNN}, we require each recurrent network to learn the function $\vet{y}{t}=\sin (\gamma \sum_{j=-c+1}^{a}\vet{w}{j+c} \vet{x}{t+j} )$, where \ve{w} is a linear filter. The parameter $\gamma$ scales the argument of the sine function and thus controls the degree of non-linearity in the function: for small $\gamma$ the function is roughly linear, while for large $\gamma$ the function is highly non-linear. Integers $a\ge 0 $ (acausal) and $c\ge 0$ (causal) control the length of the causal and acausal portions of the linear filter \ve{w} that is applied to the input \ve{x}. 

We generate datasets with different combinations of $\gamma \in [0.1,\dots, 5.0]$ and $a \in [0,\dots,10]$, choosing $c$ such that $a+c=20$. For each combination, we generate a filter \ve{w} with 20 elements drawn uniformly in $[0.0,1.0)$, and random input sequences with $T=50$ elements drawn from a uniform distribution $[0.0,1.0)$. In total, there are 10,000 generated sequences for training, 2,000 for validation, and 2,000 for testing with each set of parameter values.
The output is computed following the previous formula and with zero padding for the borders.
We generate 5 repetitions of each dataset with different filters \ve{w} and inputs \ve{x}.

We train LSTM, d-LSTM with delays 5 and 10, and Bi-LSTM networks to minimize mean squared error (MSE). The LSTM and d-LSTM have 100 hidden units and the Bi-LSTM has 70 per network, matching the numbers of parameters. A linear layer after the recurrent layer outputs a single value per timestep. Models are trained in batches of 100 sequences for 1,000 epochs. Training is stopped if the validation MSE falls below 1e-5. Training is repeated five times for each $\left(\gamma, a\right)$ value. 

Figure \ref{fig:sinus} shows the average test MSE for each model as a function of $\gamma$ (degree of input non-linearity) and $a$ (acausality). 
LSTM performance (Fig. \ref{fig:sinus}~(a)) is poor everywhere except where the filter is purely causal. Surprisingly, the network performs quite well even when the amount of non-linearity ($\gamma$) is quite high. The reason for this seems to be that temporal depth enables the LSTM to approximate this function well. Bi-LSTM performance (Fig. \ref{fig:sinus}~(b)) follows a similar trend for the causal case ($a=0$) as the forward LSTM, but also has good performance for acausal filters ($a>0$) when the function is nearly linear ($\gamma$ is small). As the non-linearity of the function increases, however, Bi-LSTM performance suffers. This occurs because the Bi-LSTM needs to approximate a highly non-linear function with a linear combination of its forward and backward outputs, which cannot be done with small error. Improving performance would require stacked Bi-LSTM layers.

In contrast, d-LSTM networks have excellent performance for both non-linear and acausal functions. The d-LSTM with delay 5 (Fig. \ref{fig:sinus}~(c)) shows a clear switch in performance from acausality $a=5$ to $6$.
This perfectly matches the limit of acausal elements that the network has access to.
For the d-LSTM with delay 10 (Fig. \ref{fig:sinus}~(d)), the network performs well for acausality values $a$ up to 10. 

An interesting outcome of this experiment is the better performance observed for the d-LSTM over the Bi-LSTM. This shows that the d-LSTM can be a better fit than a Bi-LSTM for the right task. Furthermore, the d-LSTM network seems to approximate the functionality of a stacked Bi-LSTM by approximating highly non-linear functions. In practice, this could be a great benefit for applications where there is no need to treat the whole sequence. Moreover, this could be impossible in other cases, such as streamed data. In such  cases, the d-LSTM would shine over bidirectional architectures. On the other hand, we expect the Bi-LSTM to perform better when the acausality needs for the task are longer than the delay, i.e., $a>d$.

\subsection{Masked Character-Level Language Modeling}
\begin{table*}[tb]
	\caption{Performance of different networks on the masked character-level language modeling task in bits per character (BPC); lower is better. Mean and standard deviation values are computed over 5 repetitions of training and inference runtime on the test set.}
	\label{tbl:mlm}
	\vskip 0.1in
	\begin{center}
		\begin{small}
			\begin{sc}
				\begin{tabular}{lccccccr}
					\toprule
					Model & Layers & Delay & Units / layer & Params. & Val. BPC & Test BPC & Runtime \\
					\midrule
					LSTM & 1 &  - & 1024 & 4271411 & $2.003 \pm 0.003$ & $\mathbf{2.075 \pm 0.002}$ & $ 3.44ms \pm 0.09$ \\
				        LSTM & 2 &  - &  594 & 4283641 & $2.015 \pm 0.005$ & $2.087 \pm 0.005$ & $ 4.93ms \pm 0.13$ \\
				        LSTM & 5 &  - &  343 & 4272372 & $2.091 \pm 0.016$ & $2.155 \pm 0.014$ & $17.22ms \pm 0.62$ \\
					\midrule
				     Bi-LSTM & 1 &  - &  722 & 4278879 & $0.977 \pm 0.004$ & $1.037 \pm 0.004$ & $ 4.97ms \pm 0.07$ \\
					Bi-LSTM & 2 &  - &  363 & 4277173 & $0.633 \pm 0.003$ & $\mathbf{0.677 \pm 0.002}$ & $13.72ms \pm 0.31$ \\
				     Bi-LSTM & 5 &  - &  202 & 4287151 & $0.637 \pm 0.003$ & $0.677 \pm 0.004$ & $29.18ms \pm 0.23$ \\
					\midrule
				      d-LSTM & 1 &  1 & 1024 & 4271411 & $1.332 \pm 0.001$ & $1.390 \pm 0.001$ & $ 3.29ms \pm 0.22 $ \\
				      d-LSTM & 1 &  5 & 1024 & 4271411 & $0.708 \pm 0.005$ & $0.755 \pm 0.004$ & $ 3.39ms \pm 0.08 $ \\
					d-LSTM & 1 &  8 & 1024 & 4271411 & $0.662 \pm 0.002$ & $\mathbf{0.706 \pm 0.003}$ & $ 3.36ms \pm 0.08 $ \\
				      d-LSTM & 1 & 10 & 1024 & 4271411 & $0.666 \pm 0.004$ & $0.709 \pm 0.004$ & $ 3.56ms \pm 0.10 $ \\
					\bottomrule
				\end{tabular}
			\end{sc}
		\end{small}
	\end{center}
	\vskip -0.1in
\end{table*}

Next we examined a language task which should benefit from acausal information, masked character-level language modeling. This task is adapted from previous work in training bidirectional language models \cite{devlin_bert_2019}. To generate masked sequences, we randomly replace each character with a mask token (`[MASK]') with 20\% probability. The task of the network is to predict the correct character when it encounters a mask token. Because each sequence contains multiple mask tokens, the network will need to fill in some mask tokens conditioned on an input sequence that already contains one or more mask tokens. This can be thought of as a signal reconstruction task: when sequential inputs are randomly degraded, how well can the network recover the true signal? Acausal information clearly helps with this reconstruction. For example, the missing letter in the sequence ``hik[MASK]ng'' is easier to predict than the sequence ``hik[MASK]''. 

We used \texttt{text8}, a clean 100MB sample of English Wikipedia text \cite{text8} which consists of 27 characters (the English alphabet and spaces). The input data contained an extra 28th mask character. These 28 characters were mapped to an input embedding layer of dimension 10. The output layer was independent of the input embedding, and only consisted of the 27 non-mask characters. Following previous work \cite{Mikolov_dataSplit}, the first 90M characters formed the training set, the next 5M the validation set, and the last 5M the test set. All models were trained with a sequence length of 180 characters, in mini-batches of 128 sequences for a total of 20 epochs. Success on the task is measured by calculating bits-per-character (BPC) for the mask tokens only. We measured forward-pass runtimes on a Nvidia Titan V GPU and report average time to process a mini-batch. 

The results are summarized in Table \ref{tbl:mlm}. As expected, the stacked Bi-LSTMs achieve the lowest BPC. However, as the number of layers increases, the inference runtime also increases because of the synchronization needed between layers. Notably, d-LSTMs with intermediate delays achieve a BPC that is within 5\% of the Bi-LSTM with at least $4\times$ faster runtime. Since all of the d-LSTMs have a single layer, inference runtime remains constant as the delay and the capacity of these networks increases. We find similar results for other network capacities (see supplementary material).

\subsection{Real-World Part-of-Speech Tagging}
\begin{table*}[tb]
	\caption{Parts-of-Speech performance for German, English, and French  languages. The models are composed of two subnetworks at character-level and word-level. Best bidirectional network and best forward-only network are marked in bold for each language.}
	\label{tbl:pos}
	\vskip 0.1in
	\begin{center}
		\begin{small}
			\begin{sc}
		\begin{tabular}{c|cccc}
			\toprule
			Language & Char-level network & Word-level network & Validation Accuracy & Test Accuracy \\
			\midrule
			&LSTM&LSTM &  $92.05 \pm 0.16$ &  $91.58 \pm 0.11$  \\
			German & d-LSTM delay=1&d-LSTM delay=1 &  $93.48 \pm 0.31$ &  $\mathbf{92.87 \pm 0.24}$  \\
			&d-LSTM  delay=1&Bi-LSTM &  $93.93 \pm 0.06$ &  $\mathbf{93.39 \pm 0.18}$  \\ 
			&Bi-LSTM&Bi-LSTM &  $93.88 \pm 0.13$ &  $93.15 \pm 0.08$  \\ 
			\midrule
			&LSTM&LSTM &  $92.05 \pm 0.13$ &  $92.14 \pm 0.10$  \\
			English &d-LSTM delay=1&d-LSTM  delay=1 &  $94.57 \pm 0.08$ &  $\mathbf{94.57 \pm 0.14}$  \\
			& d-LSTM delay=1&Bi-LSTM &  $94.94 \pm 0.07$ &  $\mathbf{94.95 \pm 0.06}$  \\ 
			& Bi-LSTM&Bi-LSTM &  $94.85 \pm 0.05$ &  $94.84 \pm 0.08$  \\
			\midrule
			& LSTM&LSTM &  $96.67 \pm 0.07$ &  $96.10 \pm 0.11$  \\
			French &d-LSTM with delay=1&d-LSTM with delay=1 &  $97.49 \pm 0.04$ &  $\mathbf{97.04 \pm 0.13}$  \\
			&d-LSTM with delay=1&Bi-LSTM & $97.67 \pm 0.07$ &  $\mathbf{97.23 \pm 0.12}$  \\ 
			& Bi-LSTM&Bi-LSTM &  $97.63 \pm 0.06$ &  $97.22 \pm 0.11$  \\ 
			\bottomrule
		\end{tabular}
\end{sc}
\end{small}
\end{center}
\vskip -0.1in
\end{table*}

In the previous experiments, we show that d-LSTM is capable of approximating and even outperforming a Bi-LSTM in some cases. In practice, however, the elements in a sequence may have different forward and backward relations. This poses a challenge for delayed networks that are constrained to a specific delay. If the delay is too low, it may not be enough for some long dependencies between elements. If it is too high, the network may forget information and require higher capacity (and maybe training data). 
This is prevalent in several NLP tasks. Therefore we compare the performance of the d-LSTM with a Bi-LSTM on an NLP task where Bi-LSTMs achieve state-of-the-art performance, the Part-of-Speech (POS) tagging task \cite{POSBiLSTM,POSBiLSTM2,POSDualBiLSTM}. The task involves processing a variable length sequence to predict a POS tag (e.g.~Noun, Verb) per word, using the Universal Dependencies (UD)~\cite{UD} dataset. More details can be   found in the supplementary material.

The dual Bi-LSTM architecture proposed by \citet{POSDualBiLSTM} is followed to test the approximation capacity of the d-LSTMs.
In this model, a word is encoded using a combination of word embeddings and character-level encoding. The encoded word is fed to a Bi-LSTM followed by a linear layer with softmax to produce POS tags. The character-level encoding is produced by first computing the embedding of each character and then feeding it to a Bi-LSTM. The last hidden state in each direction is concatenated with the word embedding to form the character-level encoding.

The character-level Bi-LSTM has 100 units in each direction and the LSTM/d-LSTMs have 200 units to generate encodings of the same size. For the word-level subnetwork, the hidden state is of size 188 for the Bi-LSTM, and 300 units for the LSTM/d-LSTM to match the number of parameters. The networks are trained for 20 epochs with cross-entropy loss. We train combinations of networks with delays 0 (LSTM), 1, 3, and 5 for the character-level subnetwork, and delays 0 through 4 for the word-level. Each network has 5 repeats with random initialization.

Results are presented in Table \ref{tbl:pos}. For brevity, we include a subset of the combinations for each language (the complete table can be found in the supplementary material). For the character-level model, LSTMs without delay yield reduced performance. However, replacing only the character-level Bi-LSTM with a LSTM does not affect the performance (supplementary material). This suggests that only the word-level subnetwork benefits from acausal elements in the sentence. 
Interestingly, using a d-LSTM with delay 1 for the character-level network achieves a small improvement over the double-bidirectional model in English and German. 
Replacing the word-level Bi-LSTM with an LSTM decreases performance significantly. However, using even a d-LSTM with delay 1 improves performance to within 0.3\% of the original Bi-LSTM model.

\section{Conclusions}
%stacked RNN
In this paper we analyze the d-RNN, a single layer RNN where the output is delayed relative to the input. We show that this simple modification to the classical RNN adds both depth in time and acausal processing. We prove that a d-RNN is a superset of stacked RNNs, which are frequently used for sequence problems: a d-RNN with output delay $d$ and specific constraints on its weights is exactly equivalent to a stacked RNN with $d+1$ layers.
% Bi-RNN and Stacked Bi-RNN
We also show that the d-RNN can approximate bidirectional RNNs and stacked bidirectional RNNs because the delay allows the model to look at future as well as past inputs.
% Final
In sum, we found that d-RNNs are a simple, elegant, and computationally efficient alternative that captures many of the best features of different RNN architectures while avoiding many downsides.

% In the unusual situation where you want a paper to appear in the
% references without citing it in the main text, use \nocite
%\nocite{langley00}

\bibliography{references}
\bibliographystyle{icml2020}

\ifdefined\preprint		

\appendix
\renewcommand{\theequation}{\thesection.\arabic{equation}}

\section{Theorem 1 Proof}
\label{sec:proof-thm1}

Let us recall the notation introduced in the main paper.
We use superscript $(i)$ to refer to a weight matrix or vector related to layer $i$ in a stacked network, e.g., \masl{W}{h}{i}, or \vetl{h}{t}{i}. For a single-layer d-RNN, we refer to weight matrices and related vectors with "hat", e.g., \mahs{W}{h} or \veht{h}{t}. Additionally, we define the block notation as subvector \vehtb{v}{t}{i} refers to the $i$-th block of vector \veht{v}{t} composed of $k$ blocks. The blocks follow the definition in Equations (3)-(5).

\begin{proof}[Proof of Theorem 1]
We prove Theorem 1 %\ref{thm:dRNN}
by induction on the sequence length $t$. First, we show that for $t=1$ the stacked RNN and the d-RNN with the constrained weights are equivalent. Namely, for $t=1$ we show that the outputs and the hidden states are the same, i.e. $\veht{y}{k} = \vet{y}{1}$ and $\vehtb{h}{i}{i}  = \vetl{h}{1}{i}$, respectively.
Without loss of generality, we have  for any $i$ in $1 \dots k$ the following:
\begin{align*}
	\vehtb{h}{i}{i} &= \fb{i}{\mahs{W}{x} \vet{x}{i} + \mahs{W}{h} \veht{h}{i-1} + \vehs{b}{h}} \\ 
	& =  \f{\mahsb{W}{x}{i} \vet{x}{i}  + \masl{W}{x}{i}\vehtb{h}{i-1}{i-1} + \masl{W}{h}{i} \vehtb{h}{i-1}{i} + \vesl{b}{h}{i}} \\
	& =  \f{\ve{0} + \masl{W}{x}{i}\vehtb{h}{i-1}{i-1} + \masl{W}{h}{i} \vetl{h}{0}{i} + \vesl{b}{h}{i}} \\
	& =  \fopen{\masl{W}{x}{i} \cdot} \\
	&\qquad	\f{\masl{W}{x}{i-1}\vehtb{h}{i-2}{i-2} + \masl{W}{h}{i-1} \vehtb{h}{i-2}{i-1} + \vesl{b}{h}{i-1}} \\
		&\quad \fclose{+ \masl{W}{h}{i} \vetl{h}{0}{i} + \vesl{b}{h}{i}} \\
	& =  \fopen{\masl{W}{x}{i} \cdot } \\
	& \qquad	\f{\masl{W}{x}{i-1}\vehtb{h}{i-2}{i-2} + \masl{W}{h}{i-1} \vetl{h}{0}{i-1} + \vesl{b}{h}{i-1}} \\
	& \quad	\fclose{+ \masl{W}{h}{i} \vetl{h}{0}{i} + \vesl{b}{h}{i}} \\
	& = \dots \\
	& = \fopen{\masl{W}{x}{i} \ldots \fopen{\masl{W}{x}{j}  \ldots}} \\
			&\qquad \fopen{\masl{W}{x}{2}\cdot \f{\masl{W}{x}{1} \vet{x}{1} + \masl{W}{h}{1} \vetl{h}{0}{1} + \vesl{b}{h}{1}}} \\
			& \qquad \fclose{+ \masl{W}{h}{2} \vetl{h}{0}{2} + \vesl{b}{h}{2}} \\
		&\quad\fclose{\fclose{ \ldots + \masl{W}{h}{j} \vetl{h}{0}{j} + \vesl{b}{h}{j}}
			\ldots + \masl{W}{h}{i} \vetl{h}{0}{i} + \vesl{b}{h}{i}} %\\
\end{align*}
\begin{align*}
	& =  \fopen{\masl{W}{x}{i} \ldots \fopen{\masl{W}{x}{j}  \ldots} } \\
			& \qquad \f{\masl{W}{x}{2}\vetl{h}{1}{1}	+ \masl{W}{h}{2} \vetl{h}{0}{2} + \vesl{b}{h}{2}} \\
		& \quad \fclose{\fclose{
			\ldots + \masl{W}{h}{j} \vetl{h}{0}{j} + \vesl{b}{h}{j}}
		\ldots + \masl{W}{h}{i} \vetl{h}{0}{i} + \vesl{b}{h}{i}} \\
	& = \dots \\
	& =  \fopen {\masl{W}{x}{i} \ldots }\\
	&\qquad \f{\masl{W}{x}{j}\vetl{h}{1}{j-1}+ \masl{W}{h}{j} \vetl{h}{0}{j} + \vesl{b}{h}{j}}\\
		& \quad \fclose{ \ldots + \masl{W}{h}{i} \vetl{h}{0}{i} + \vesl{b}{h}{i}} \\
	& = \dots \\
	& = \f{\masl{W}{x}{i} \vetl{h}{1}{i-1} + \masl{W}{h}{i} \vetl{h}{0}{i} + \vesl{b}{h}{i}} \\
	& = \vetl{h}{1}{i},
\end{align*}
where we used the initialization assumption $\vehtb{h}{i-1}{i}=\vetl{h}{0}{i}$ for all $i=1\dots k$, and the definition of the hidden state in Equations (3)-(4) %\eqref{eq:RNN_h} 
for $j-1$ blocks, in the previous steps. In particular, we have for $j=k$,
\begin{equation*}
\vehtb{h}{k}{k}  =  \vetl{h}{1}{k}.
\end{equation*}
Plugging this result and the definition of the output weights and biases in Equation (8) %\eqref{eq:dRNN_o} 
into Equation (2) %\eqref{eq:RNN_y} 
for computing the output, we obtain
\begin{align}
\veht{y}{k} &= \g{\mahs{W}{o} \veht{h}{k}  + \vehs{b}{o}} \nonumber\\
&=  \g{\mas{W}{o} \vehtb{h}{k}{k}  + \ves{b}{o}} \nonumber \\
&= \g{\mas{W}{o} \vetl{h}{1}{k}  + \ves{b}{o}} \nonumber \\
&= \vet{y}{1}.
\label{eq:dRNN_proof_yk}
\end{align}
Which concludes the basis of the induction.

Next, we assume that $\vehtb{h}{t+i-1}{i}  = \vetl{h}{t}{i}$ for all $1 \le i \le k $ and $t \le T-1$, and prove that it holds for the hidden states for all layers when $t=T$: $\vehtb{h}{T+i-1}{i}  = \vetl{h}{T}{i}$, $\forall 1\le i \le k$. Without loss of generality, we have for the hidden state \vehtb{h}{T+i-1}{i} in constrained weights single-layer d-RNN that,
\begin{align*}
	\vehtb{h}{T+i-1}{i} &= \fb{i}{\mahs{W}{x} \vet{x}{T+i-1} + \mahs{W}{h} \veht{h}{T+i-2} + \vehs{b}{h}} \\ 
	& =  \fopen{\mahsb{W}{x}{i} \vet{x}{T+i-1}  + \masl{W}{x}{i}\vehtb{h}{T+i-2}{i-1}} \\
	& \quad \fclose{+ \masl{W}{h}{i} \vehtb{h}{T+i-2}{i} + \vesl{b}{h}{i}} \\
	& =  \f{\ve{0} + \masl{W}{x}{i}\vehtb{h}{T+i-2}{i-1} + \masl{W}{h}{i}\vehtb{h}{T+i-2}{i}  + \vesl{b}{h}{i}} \\
	& =  \fopen{\masl{W}{x}{i} \cdot \fopen{\masl{W}{x}{i-1}\vehtb{h}{T+i-3}{i-2} + } } \\
	& \qquad \fclose{\masl{W}{h}{i-1} \vehtb{h}{T+i-3}{i-1}+ \vesl{b}{h}{i-1}} \\
	& \quad \fclose{+ \masl{W}{h}{i} \vehtb{h}{T+i-2}{i}  + \vesl{b}{h}{i}} \\
	& = \dots \\
	& =  \fopen{\masl{W}{x}{i} \ldots \fopen{\masl{W}{x}{j}  \ldots }} \\
	&  \qquad \fopen{\masl{W}{x}{2} \f{\masl{W}{x}{1} \vet{x}{T} + \masl{W}{h}{1} \vehtb{h}{T-1}{1} + \vesl{b}{h}{1}} } \\
	& \qquad \fclose{+ \masl{W}{h}{2} \vehtb{h}{T}{2} + \vesl{b}{h}{2}} \\
	& \enspace \quad  \fclose{ \ldots + \masl{W}{h}{j}  \vehtb{h}{T+j-2}{j} + \vesl{b}{h}{j}} \\
	& \quad \fclose{ \ldots + \masl{W}{h}{i} \vehtb{h}{T+i-2}{i}  + \vesl{b}{h}{i}}
\end{align*}
From the inductive assumption we have $\vehtb{h}{T+j-2}{j} = \vetl{h}{T-1}{j} $ for all $1 \le j \le k$, then it follows 
\begin{align*}
	\vehtb{h}{T+i-1}{i}	& =  \fopen{\masl{W}{x}{i} \ldots 
		\fopen{\masl{W}{x}{j}  \ldots } } \\
	& \qquad \fopen{\masl{W}{x}{2} \f{\masl{W}{x}{1} \vet{x}{T} + \masl{W}{h}{1}\vetl{h}{T-1}{1} + \vesl{b}{h}{1}} } \\
	& \qquad \fclose{ + \masl{W}{h}{2} \vetl{h}{T-1}{2}  + \vesl{b}{h}{2}} \\
	& \enspace \quad \fclose{ \ldots + \masl{W}{h}{j}  \vetl{h}{T-1}{j} + \vesl{b}{h}{j}} \\
	& \quad \fclose{ \ldots + \masl{W}{h}{i} \vetl{h}{T-1}{i} + \vesl{b}{h}{i}}\\
	& =  \fopen{\masl{W}{x}{i} \ldots \fopen{\masl{W}{x}{j}  \ldots }} \\
	& \qquad \f{\masl{W}{x}{2}\vetl{h}{T}{1} + \masl{W}{h}{2} \vetl{h}{T-1}{2} + \vesl{b}{h}{2}} \\
	& \enspace \quad \fclose{ \ldots + \masl{W}{h}{j}  \vetl{h}{T-1}{j} + \vesl{b}{h}{j}} \\
	& \quad \fclose{ \ldots + \masl{W}{h}{i} \vetl{h}{T-1}{i} + \vesl{b}{h}{i}}\\
	& = \dots \\
	& =  \fopen{\masl{W}{x}{i} \ldots} \\
	& \qquad \f{\masl{W}{x}{j}\vetl{h}{T}{j-1}+ \masl{W}{h}{j} \vetl{h}{T-1}{j} + \vesl{b}{h}{j}} \\
	& \quad	\fclose{ \ldots + \masl{W}{h}{i} \vetl{h}{0}{i} + \vesl{b}{h}{i}} %\\
\end{align*}
\begin{align*}
	& = \dots \\
	& =  \f{\masl{W}{x}{i} \vetl{h}{T}{i-1} + \masl{W}{h}{i} \vetl{h}{T-1}{i} + \vesl{b}{h}{i}} \\
	& =  \vetl{h}{T}{i},
\end{align*}
where we used the definition of the hidden states in Equations (3)-(4). %\eqref{eq:sRNN_h1}-\eqref{eq:sRNN_hk}. 
In particular, we have for $i=k$ that $\vehtb{h}{T+k-1}{k}=\vetl{h}{T}{k}$.

Now, we show that $\veht{y}{T+k-1}=\vet{y}{T}$.
By the definition of the output weights and biases in Equation (8). %\eqref{eq:dRNN_o}, 
and by the fact that $\vehtb{h}{T+k-1}{k}=\vetl{h}{T}{k}$, we obtain
\begin{align}
\veht{y}{T+k-1} &= \g{\mahs{W}{o} \veht{h}{T+k-1}  + \vehs{b}{o}} \nonumber\\
&=  \g{\mas{W}{o} \vehtb{h}{T+k-1}{k}  + \ves{b}{o}} \nonumber\\
&= \g{\mas{W}{o} \vetl{h}{T}{k} + \ves{b}{o}} \nonumber \\
&= \vet{y}{T}, \label{eq:proof_steps_y}
\end{align}
which completes the proof.
\end{proof}

\section{Lemma 1 Proof}
\label{sec:proof-InitializationLemma}
We show next that there exists an initialization vector that allows us to initialize the equivalent single-layer weight constrained d-RNN as defined in Theorem 1. 
\begin{proof}[Proof of Lemma 1]
From the surjective definition of the activation function \f{\cdot}, we know that the function \f{\cdot} is right-invertible.
Namely, there is a function $\mathrm{r}:D\rightarrow \real{}$ such that for any $d\in D$, \rf{\cdot} satisties $\f{\rf{d}}=d$.
First, we note that for $i=1$, we have $\vehtb{h}{0}{1}=\vetl{h}{0}{1}$. When $i=2$, we have
\begin{align}
\vetl{h}{0}{2} &=  \vehtb{h}{1}{2} &= \f{ \masl{W}{x}{2} \vetl{h}{0}{1} + \masl{W}{h}{2}\vehtb{h}{0}{2} + \vesl{b}{h}{2} }. \label{eq:inith_0_2}
\end{align}
From \eqref{eq:inith_0_2} and the right-invertible function \rf{\cdot} satisfies $\vetl{h}{0}{2} = \f{\rf{\vetl{h}{0}{2}}}$, we obtain
\begin{align}
\rf{\vetl{h}{0}{2}} =&  \masl{W}{x}{2} \vetl{h}{0}{1} + \masl{W}{h}{2}\vehtb{h}{0}{2} + \vesl{b}{h}{2} \nonumber \\
\Longrightarrow \vehtb{h}{0}{2} =&   {\masl{W}{h}{2}}^{\dagger}\left[\rf{\vetl{h}{0}{2}} - \masl{W}{x}{2} \vetl{h}{0}{1} - \vesl{b}{h}{2}\right],
\label{eq:initrh_0_2}
\end{align}
where $\ma{A}^{\dagger}$ is the pseudoinverse of matrix \ma{A}.

We assume that we obtained the initializations for $i-1$ and compute the initialization for block $i$.In general, for block $i$ we have 
\begin{eqnarray}
\vetl{h}{0}{i}& =& \vehtb{h}{i-1}{i} = \f{ \masl{W}{x}{i} \vehtb{h}{i-1}{i-1} + \masl{W}{h}{i}\vehtb{h}{i-2}{i} + \vesl{b}{h}{i} } \nonumber 
\end{eqnarray}
We can plug in the initialization and the intermediate computed hidden states for block $i-1$ to obtain
\begin{eqnarray}
\vehtb{h}{i-2}{i} & =&  {\masl{W}{h}{i}}^{\dagger}\left[\rf{\vetl{h}{0}{i}} - \masl{W}{x}{i} \vehtb{h}{i-1}{i-1} - \vesl{b}{h}{i}\right].   \nonumber 
\end{eqnarray}
We continue to reapply the recursive formula one step at a time until we reach the last step before the initialization \vehtb{h}{0}{i}:
\begin{align}
\vehtb{h}{i-j}{i}  = &  {\masl{W}{h}{i}}^{\dagger}\left[\rf{\vehtb{h}{i-j+1}{i}} - \masl{W}{x}{i} \vehtb{h}{i-j+1}{i-1} - \vesl{b}{h}{i}\right]  \nonumber \\
 \vdots & \nonumber \\
\vehtb{h}{1}{i} =& \f{ \masl{W}{x}{i} \vehtb{h}{1}{i-1} + \masl{W}{h}{i}\vehtb{h}{0}{i} + \vesl{b}{h}{i} } \nonumber \\
\Longrightarrow \vehtb{h}{0}{i} =&   {\masl{W}{h}{i}}^{\dagger}\left[\rf{\vehtb{h}{1}{i}} - \masl{W}{x}{i} \vehtb{h}{1}{i-1} - \vesl{b}{h}{i}\right],
\label{eq:initrh_0_i}
\end{align}
Following these steps from \vetl{h}{0}{i} to obtain \vehtb{h}{0}{i}, we constructed the initialization of the weight constrained d-RNN to accurately mimic the initialization of the stacked RNN.
\end{proof}

\section{Extension to d-LSTMs}
A Long Short-Term Memory recurrent cell \cite{LSTM} is given by the introduction of a cell state and a series of gates that control the updates of the states. The cell state together with the gates aim to solve the vanishing gradients problems in the RNN. 
The LSTM cell is highly popular and we refer to the following implementation:
\begin{align}
\veht{e}{t} &= \fsigmoid{\mahs{W}{xe}\vet{x}{t} + \mahs{W}{he}\veht{h}{t-1} + \vehs{b}{e} }, \label{eq:lstm-inputgate}\\
\veht{f}{t} &= \fsigmoid{\mahs{W}{xf}\vet{x}{t} + \mahs{W}{hf}\veht{h}{t-1} + \vehs{b}{f}}, \label{eq:lstm-forgetgate}\\
\veht{o}{t} &= \fsigmoid{\mahs{W}{xo}\vet{x}{t} + \mahs{W}{ho}\veht{h}{t-1} + \vehs{b}{o}}, \label{eq:lstm-outputgate}\\
\veht{g}{t} &= \ftanh{\mahs{W}{xc}\vet{x}{t} + \mahs{W}{hc}\veht{h}{t-1} + \vehs{b}{c}}, \label{eq:lstm-cellgate}\\
\veht{c}{t} &= \veht{f}{t} \odot \veht{c}{t-1} + \veht{e}{t} \odot \veht{g}{t}, \label{eq:lstm-cellstate}\\
\veht{h}{t} &= \veht{o}{t}  \odot \ftanh{\veht{c}{t}}, \label{eq:lstm-hiddenstate}
\end{align}
where \veht{e}{t} is the input gate, \veht{f}{t} the forget gate, \veht{o}{t} the output gate, \veht{g}{t} the cell gate, \veht{c}{t} the cell state, and \veht{h}{t} the hidden state. The weight matrices are symbolized \mahs{W}{xa} and \mahs{W}{ha} as well as the bias \vehs{b}{a}, with $\mathbf{a} \in  \{\ve{e},\ve{c},\ve{f},\ve{o}\}$ being the respective gate. The symbol $\odot$ represents an element-wise product and \fsigmoid{\cdot} is the sigmoid function.

First, we note that the set of Equations \eqref{eq:lstm-inputgate}-\eqref{eq:lstm-hiddenstate} can be expanded into the following two equations:
\begin{align}
\veht{c}{t} = &  \fsigmoid{\mahs{W}{xf}\vet{x}{t} + \mahs{W}{hf}\veht{h}{t-1} + \vehs{b}{f}} \odot \veht{c}{t-1} \nonumber \\
&+  \fsigmoid{\mahs{W}{xe}\vet{x}{t} + \mahs{W}{he}\veht{h}{t-1} + \vehs{b}{e} } \nonumber \\
& \odot  \ftanh{\mahs{W}{xc}\vet{x}{t} + \mahs{W}{hc}\veht{h}{t-1} + \vehs{b}{c}}, \label{eq:lstm-cellstate-short}\\
\veht{h}{t} = & \fsigmoid{\mahs{W}{xo}\vet{x}{t} + \mahs{W}{ho}\veht{h}{t-1} + \vehs{b}{o}}  \odot \ftanh{\veht{c}{t}}. \label{eq:lstm-hiddenstate-short}
\end{align}
Rewriting the LSTM Equations  \eqref{eq:lstm-inputgate}-\eqref{eq:lstm-hiddenstate} in this form, allows to remain with the recurrent equations where both \veht{h}{t} and \veht{c}{t} depend on the previous hidden and cell states, \veht{h}{t-1} and \veht{c}{t-1} , and the current input \vet{x}{t}.

Next, we describe the weight matrices for the single-layer d-LSTM that matches a stacked-LSTM with $k$ layers. The matrices and biases follow the exact same pattern as the RNN proof, being the same for all gates. 
\begin{align}
\mahs{W}{ha}  &=   \begin{bmatrix} 
\masl{W}{ha}{1}  & \ma{0} & & \cdots& & \ma{0} \\ 
\masl{W}{xa}{2}  & \masl{W}{ha}{2}  & & & & \\
\ma{0}& \ddots & \ddots & \ddots & & \vdots\\
\vdots& \ddots & \masl{W}{xa}{i}  & \masl{W}{ha}{i} & \ddots & \\
& & & \ddots & \ddots& \ma{0}\\
\ma{0}& \cdots& & \ma{0} & \masl{W}{xa}{k}  & \masl{W}{ha}{k}  
\end{bmatrix} \label{eq:dLSTM_Wh} \\ %%%%%%%
\vehs{b}{ha} & =  \begin{bmatrix} 
\vesl{b}{ha}{1} \\ 
\vdots \\
\vesl{b}{ha}{k}
\end{bmatrix}, \qquad \qquad %%%%%%%
\mahs{W}{xa}  =  \begin{bmatrix} 
\masl{W}{xa}{1} \\ 
\ma{0} \\
\vdots \\
\ma{0}
\end{bmatrix}, \label{eq:dLSTM-all} % \\ %%%%%%%
\end{align}
where $\mahs{W}{xa}\in\real{kn \times q}$ are the input weights, $\mahs{W}{ha}\in\real{kn \times kn}$ the recurrent weights,  $\vehs{b}{ha}\in\real{kn}$ the biases, for gate $\ve{a} \in \{\ve{e}, \ve{c}, \ve{o}, \ve{f}\}$. We follow the same notation for blocks and layers introduced with Theorem 1.
We omit the equations for the output element \veht{y}{t} as they are exactly the same as the RNN in Theorem 1, and thus require the same steps for proving that outputs are equal, i.e., $\veht{y}{T+k-1}=\vet{y}{T}$. Therefore, for the LSTM theorem we will focus on the hidden and cell states.
\begin{theorem}
	\label{thm:dLSTM}
	Given an input sequence $\set{\vet{x}{t}}_{t=1...T}$ and a stacked LSTM with $k$ layers, and initial states $\{\vetl{h}{0}{i}, \vetl{c}{0}{i}\}_{i=1\dots k}$, the d-LSTM with delay $d=k-1$, defined by Equations \eqref{eq:dLSTM_Wh}-\eqref{eq:dLSTM-all} and initialized with \veht{h}{0} such that $\vehtb{h}{i-1}{i}=\vetl{h}{0}{i}, \; \forall i=1\dots k$ and \veht{c}{0} such that $\vehtb{c}{i-1}{i}=\vetl{c}{0}{i}, \; \forall i=1\dots k$, produces the same output sequence but delayed by $k-1$ timesteps, i.e., $\veht{y}{t+k-1} = \vet{y}{t}$ for all $t=1\dots T$. Further, the sequence of hidden and cell states at each layer $i$ are equivalent with delay $i-1$, i.e., $\vehtb{h}{t+i-1}{i}  = \vetl{h}{t}{i}$ and $\vehtb{c}{t+i-1}{i}  = \vetl{c}{t}{i}$ for all $1 \le i \le k $ and $t\ge1$.
\end{theorem}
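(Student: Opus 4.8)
The plan is to mirror the induction used for Theorem 1, but to carry both the hidden state \veht{h}{t} and the cell state \veht{c}{t} through the argument simultaneously. The central observation is that the only cross-block coupling in the d-LSTM comes from the block-bidiagonal matrix \mahs{W}{ha} acting on \veht{h}{t-1} inside the four gate computations \eqref{eq:lstm-inputgate}--\eqref{eq:lstm-cellgate}; the cell-state recurrence \eqref{eq:lstm-cellstate-short} and the hidden-state readout \eqref{eq:lstm-hiddenstate-short} are purely element-wise (Hadamard products and element-wise $\tanh$) and therefore respect the block partition of Equations \eqref{eq:dLSTM_Wh}--\eqref{eq:dLSTM-all}. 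Consequently, block $i$ of \veht{c}{t} depends only on block $i$ of \veht{c}{t-1} together with block $i$ of each gate, and block $i$ of each gate depends only on blocks $i-1$ and $i$ of \veht{h}{t-1}. This is exactly the buffering structure exploited in the proof of Theorem 1, so the same unrolling applies to the pair $(\veht{h}{}, \veht{c}{})$. I take the initializations $\vehtb{h}{i-1}{i}=\vetl{h}{0}{i}$ and $\vehtb{c}{i-1}{i}=\vetl{c}{0}{i}$ as given by hypothesis, exactly as in Lemma 1.

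Concretely, I would prove by induction on the stacked time index $t$ the joint statement $\vehtb{h}{t+i-1}{i} = \vetl{h}{t}{i}$ and $\vehtb{c}{t+i-1}{i} = \vetl{c}{t}{i}$ for all $1 \le i \le k$. It is convenient to first record that these identities force the gate equivalence $\vehtb{a}{t+i-1}{i} = \vetl{a}{t}{i}$ for each gate $\ve{a} \in \{\ve{e}, \ve{f}, \ve{o}, \ve{g}\}$: substituting $\vehtb{h}{t+i-2}{i-1} = \vetl{h}{t}{i-1}$ (block $i-1$ one step earlier in d-time, which equals layer $i-1$ at stacked time $t$ by the correspondence) and $\vehtb{h}{t+i-2}{i} = \vetl{h}{t-1}{i}$ into the gate formula reproduces the stacked-LSTM gate $\vetl{a}{t}{i}$ exactly, since block $1$ is the only block fed by \vet{x}{t} through \mahs{W}{xa}.

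For the base case $t=1$, I would run an inner induction on the block index $i$, exactly as in Theorem 1. The inner base $i=1$ is immediate: block $1$ at d-time $1$ reads \vet{x}{1} and the initializations $\vehtb{h}{0}{1}=\vetl{h}{0}{1}$, $\vehtb{c}{0}{1}=\vetl{c}{0}{1}$, matching layer $1$ at stacked time $1$. For the inner step, the gates of block $i$ at d-time $i$ use $\vehtb{h}{i-1}{i-1}=\vetl{h}{1}{i-1}$ (the already-established block $i-1$) together with the initializations $\vehtb{h}{i-1}{i}=\vetl{h}{0}{i}$, $\vehtb{c}{i-1}{i}=\vetl{c}{0}{i}$; feeding these through \eqref{eq:lstm-cellstate-short} and \eqref{eq:lstm-hiddenstate-short} yields $\vehtb{c}{i}{i}=\vetl{c}{1}{i}$ and $\vehtb{h}{i}{i}=\vetl{h}{1}{i}$. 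The outer inductive step on $t$ is identical in spirit: assuming the claim for all stacked times up to $t-1$, an inner induction on $i$ establishes block $i$ at d-time $t+i-1$ using block $i-1$ at d-time $t+i-2$ (the current-$t$ value, just proved in the inner induction) for the cross-layer input, and the stored values $\vehtb{h}{t+i-2}{i}=\vetl{h}{t-1}{i}$, $\vehtb{c}{t+i-2}{i}=\vetl{c}{t-1}{i}$ (from the outer hypothesis) for the recurrence. Taking $i=k$ gives $\vehtb{h}{t+k-1}{k}=\vetl{h}{t}{k}$, and the output equality $\veht{y}{t+k-1}=\vet{y}{t}$ then follows verbatim from the Theorem 1 argument, since the output weights in the d-LSTM are unchanged and read off block $k$.

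The step I expect to require the most care is the bookkeeping of the two interleaved inductions (on stacked time $t$ and on block index $i$) while tracking which quantities come from the current $t$ versus the outer hypothesis at $t-1$ --- in particular, verifying that at the moment block $i$ is processed, block $i-1$ at the matching d-time has already been shown equal to its stacked counterpart, and that block $i$'s own cell and hidden states from the previous d-time are supplied by the outer hypothesis. Once this ordering is pinned down, the LSTM nonlinearities contribute nothing new: every operation beyond the gate pre-activations is element-wise and hence block-local, so no additional cross-block terms can appear.
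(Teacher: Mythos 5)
Your proposal is correct and follows essentially the same route as the paper's proof: an induction on the stacked time index $t$ in which block $i$ of the d-LSTM state at d-time $t+i-1$ is matched to layer $i$ at stacked time $t$ (i.e., $\vehtb{h}{t+i-1}{i}=\vetl{h}{t}{i}$ and $\vehtb{c}{t+i-1}{i}=\vetl{c}{t}{i}$), with the output equality $\veht{y}{t+k-1}=\vet{y}{t}$ read off from block $k$ exactly as in Theorem 1. Your explicit inner induction on the block index $i$, justified by the observation that the cell update and hidden readout are element-wise and hence block-local so that only the gate pre-activations couple adjacent blocks, is simply a cleaner packaging of the nested symbolic unrolling the paper carries out at length.
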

\begin{proof}
We prove Theorem \ref{thm:dLSTM} by induction on the sequence length $t$. First, we show that for $t=1$ the stacked LSTM and the d-LSTM with the constrained weights are equivalent. Namely, for $t=1$ we show that the outputs, hidden states and cell states are the same, i.e. $\veht{y}{k} = \vet{y}{1}$, $\vehtb{h}{i}{i}  = \vetl{h}{1}{i}$, and $\vehtb{c}{i}{i}  = \vetl{c}{1}{i}$, respectively.
Without loss of generality, we have  for any $j$ in $1 \dots k$ the following:
\begin{align*}
	\vehtb{h}{i}{i} &= \fsigmoid{\mahsb{W}{xo}{i}\vet{x}{i} + \mahsb{W}{ho}{i}\vehtb{h}{i-1}{i} + \vehsb{b}{o}{i}}  \odot \ftanh{\vehtb{c}{i}{i}} \\
	%%%%
	& = \fsigmoid{\masl{W}{xo}{i}\vehtb{h}{i-1}{i-1} +\masl{W}{ho}{i}\vehtb{h}{i-1}{i} + \vesl{b}{o}{i}} \\
	&\quad  \odot \ftanh{\vehtb{c}{i}{i}} \\
	%%%%
	& = \fsigmoid{\masl{W}{xo}{i}\vehtb{h}{i-1}{i-1} +\masl{W}{ho}{i}\vetl{h}{0}{i} + \vesl{b}{o}{i}}  \\ 
	& \quad \odot \ftanhopen{
		\fsigmoid{\masl{W}{xf}{i}\vehtb{h}{i-1}{i-1} +\masl{W}{hf}{i}\vetl{h}{0}{i} + \vesl{b}{f}{i}} } \\
	& \quad \odot 	\vehtb{c}{i-1}{i}  \\
    & \quad	+\fsigmoid{\masl{W}{xe}{i}\vehtb{h}{i-1}{i-1} +\masl{W}{he}{i}\vetl{h}{0}{i} + \vesl{b}{e}{i}} \\ 
    & \quad \ftanhclose{\odot \ftanh{\masl{W}{xc}{i}\vehtb{h}{i-1}{i-1} +\masl{W}{hc}{i}\vetl{h}{0}{i} + \vesl{b}{c}{i}}	} \\
    %%%%%%%
    & = \dots  \\
	%%%%%%%
	& = \fsigopen{\masl{W}{xo}{i} \dots \left\{	\fsigopen{\masl{W}{xo}{j} \left[ \dots \right.  } \right. } \\
	& \; \qquad \fsigopen{\masl{W}{xo}{2} \fsigmoid{\masl{W}{xo}{1}\vet{x}{1} + \masl{W}{ho}{1}\vetl{h}{0}{1} + \vesl{b}{o}{1}} } \\
    & \enspace \qquad \odot \ftanhopen{ \fsigmoid{\masl{W}{xf}{1}\vet{x}{1} +\masl{W}{hf}{1}\vetl{h}{0}{1} + \vesl{b}{f}{1}} } \\
   	& \enspace \qquad \odot \vetl{c}{0}{1}  \\
    & \enspace \qquad + \fsigmoid{\masl{W}{xe}{1}\vet{x}{1} +\masl{W}{he}{1}\vetl{h}{0}{1} + \vesl{b}{e}{1}} \\
    & \enspace \qquad \ftanhclose{ \odot \ftanh{\masl{W}{xc}{1}\vet{x}{1} +\masl{W}{hc}{1}\vetl{h}{0}{1} + \vesl{b}{c}{1}} } \\
    & \; \qquad \fsigclose{+ \masl{W}{ho}{2}\vetl{h}{0}{2} + \vesl{b}{o}{2}} \\
	& \; \qquad \odot \ftanhopen{ \fsigmoid{\masl{W}{xf}{2}\left(\dots \right)  +\masl{W}{hf}{2}\vetl{h}{0}{2} + \vesl{b}{f}{2}} } \\
	& \; \qquad  \odot \vehtb{c}{1}{2} \\
	& \; \qquad + \fsigmoid{\masl{W}{xe}{2}\left(\dots\right) +\masl{W}{he}{2}\vetl{h}{0}{2} + \vesl{b}{e}{2}} \\
	& \; \qquad \ftanhclose{\odot \ftanh{ \masl{W}{xc}{2}\left(\dots\right) +\masl{W}{hc}{2}\vetl{h}{0}{2} + \vesl{b}{c}{2}}}  \\
	& \enspace \quad \fsigclose{ \left. \dots \right] +\masl{W}{ho}{j}\vetl{h}{0}{j} + \vesl{b}{o}{j}}  \\
	& \enspace \quad \odot \ftanhopen{ \fsigmoid{\masl{W}{xf}{j} \left[\dots \right] +\masl{W}{hf}{j}\vetl{h}{0}{j} + \vesl{b}{f}{j}} } \\
	& \enspace \quad \odot \vehtb{c}{j-1}{j} \\ 
	& \enspace \quad + \fsigmoid{\masl{W}{xe}{j} \left[\dots \right]  +\masl{W}{he}{j}\vetl{h}{0}{j} + \vesl{b}{e}{j}} %\\
\end{align*}
\begin{align*}
	& \enspace \quad  \left. \ftanhclose{ \odot \ftanh{\masl{W}{xc}{j} \left[\dots \right]  +\masl{W}{hc}{j}\vetl{h}{0}{j} + \vesl{b}{c}{j}} }  \right\}  \\
    & \quad \dots  +\masl{W}{ho}{i}\vetl{h}{0}{i} + \vesl{b}{o}{i}   \\
	& \quad \odot \ftanhopen{ \fsigmoid{\masl{W}{xf}{i} \left( \dots \right) +\masl{W}{hf}{i}\vetl{h}{0}{i} + \vesl{b}{f}{i}} } \\
	& \quad \odot \vehtb{c}{i-1}{i} \\
	& \quad	+ \fsigmoid{\masl{W}{xe}{i} \left( \dots\right)  +\masl{W}{he}{i}\vetl{h}{0}{i} + \vesl{b}{e}{i}} \\
	& \quad \ftanhclose{ \odot \ftanh{\masl{W}{xc}{i} \left(\dots \right)  +\masl{W}{hc}{i}\vetl{h}{0}{i} + \vesl{b}{c}{i}}} \\
	%%%%%%%%%%%%%%%%%%
	& = \fsigopen{\masl{W}{xo}{i} \dots \left\{ \fsigopen{\masl{W}{xo}{j} \left[\dots  \right.  } \right. } \\
	& \; \qquad \fsigmoid{\masl{W}{xo}{2}\vetl{h}{1}{1} +  \masl{W}{ho}{2}\vetl{h}{0}{2} + \vesl{b}{o}{2}} \\
	& \; \qquad \odot \ftanhopen{ \fsigmoid{\masl{W}{xf}{2}\vetl{h}{1}{1} +\masl{W}{hf}{2}\vetl{h}{0}{2} + \vesl{b}{f}{2}} \odot 	\vetl{c}{0}{2}} \\
	& \; \qquad + \fsigmoid{\masl{W}{xe}{2}\vetl{h}{1}{1} +\masl{W}{he}{2}\vetl{h}{0}{2} + \vesl{b}{e}{2}} \\
	& \; \qquad \ftanhclose{\odot \ftanh{ \masl{W}{xc}{2}\vetl{h}{1}{1} +\masl{W}{hc}{2}\vetl{h}{0}{2} + \vesl{b}{c}{2}}}  \\
	& \enspace \quad \fsigclose{ \left. \dots \right] +\masl{W}{ho}{j}\vetl{h}{0}{j} + \vesl{b}{o}{j}}  \\
	& \enspace \quad \odot \ftanhopen{ \fsigmoid{\masl{W}{xf}{j} \left[\dots \right] +\masl{W}{hf}{j}\vetl{h}{0}{j} + \vesl{b}{f}{j}}  \odot  \vetl{c}{0}{j}} \\ 
	& \enspace \quad + \fsigmoid{\masl{W}{xe}{j} \left[\dots \right]  +\masl{W}{he}{j}\vetl{h}{0}{j} + \vesl{b}{e}{j}} \\
	& \enspace \quad \left. \ftanhclose{ \odot \ftanh{\masl{W}{xc}{j} \left[\dots \right]  +\masl{W}{hc}{j} \vetl{h}{0}{j} + \vesl{b}{c}{j}}	} \right\}  \\
	& \quad \dots +\masl{W}{ho}{i}\vetl{h}{0}{i} + \vesl{b}{o}{i}  \\
	& \quad \odot \ftanhopen{ \fsigmoid{\masl{W}{xf}{i} \left( \dots \right) +\masl{W}{hf}{i}\vetl{h}{0}{i} + \vesl{b}{f}{i}} \odot 	\vetl{c}{0}{i}} \\
	& \quad + \fsigmoid{\masl{W}{xe}{i} \left( \dots\right)  +\masl{W}{he}{i}\vetl{h}{0}{i} + \vesl{b}{e}{i}} \\
	& \quad \ftanhclose{ \odot \ftanh{\masl{W}{xc}{i} \left(\dots \right)  +\masl{W}{hc}{i}\vetl{h}{0}{i} + \vesl{b}{c}{i}}} \\
	%%%%%%%%%%%%%%%%%%
    & = \dots  \\
	& = \fsigopen{\masl{W}{xo}{i} \dots \left\{  \fsigmoid{\masl{W}{xo}{j} \vetl{h}{1}{j-1} + \masl{W}{ho}{j}\vetl{h}{0}{j} + \vesl{b}{o}{j}} \right. }  \\
	& \qquad \odot \ftanhopen{ \fsigmoid{\masl{W}{xf}{j} \vetl{h}{1}{j-1}+\masl{W}{hf}{j}\vetl{h}{0}{j} + \vesl{b}{f}{j}} \odot \vetl{c}{0}{j}} \\
	& \qquad+ \fsigmoid{\masl{W}{xe}{j}\vetl{h}{1}{j-1}  +\masl{W}{he}{j}\vetl{h}{0}{j} + \vesl{b}{e}{j}} \\
	& \qquad \left.\ftanhclose{ \odot \ftanh{\masl{W}{xc}{j}\vetl{h}{1}{j-1}  +\masl{W}{hc}{j}\vetl{h}{0}{j} + \vesl{b}{c}{j}} } \right\}  \\
	& \quad \dots +\masl{W}{ho}{i}\vetl{h}{0}{i} + \vesl{b}{o}{i}  \\
	& \quad \odot \ftanhopen{ \fsigmoid{\masl{W}{xf}{i} \left( \dots \right) +\masl{W}{hf}{i}\vetl{h}{0}{i} + \vesl{b}{f}{i}} \odot 	\vetl{c}{0}{i}} \\
	& \quad + \fsigmoid{\masl{W}{xe}{i} \left( \dots\right)  +\masl{W}{he}{i}\vetl{h}{0}{i} + \vesl{b}{e}{i}} \\
	& \quad \ftanhclose{ \odot \ftanh{\masl{W}{xc}{i} \left(\dots \right)  +\masl{W}{hc}{i}\vetl{h}{0}{i} + \vesl{b}{c}{i}}} \\
	%%%%%%%%%%%%%%%%%%5
    & = \dots %  \\
\end{align*}
\begin{align*}
	& = \fsigmoid{\masl{W}{xo}{i}\vetl{h}{1}{i-1} + \masl{W}{ho}{i}\vetl{h}{0}{i} + \vesl{b}{o}{i}} \\
	& \quad \odot \ftanhopen{ \fsigmoid{\masl{W}{xf}{i}\vetl{h}{1}{i-1} + \masl{W}{hf}{i}\vetl{h}{0}{i} + \vesl{b}{f}{i}} \odot \vetl{c}{0}{i} } \\
	& \quad + \fsigmoid{\masl{W}{xe}{i} \vetl{h}{1}{i-1} +\masl{W}{he}{i}\vetl{h}{0}{i} + \vesl{b}{e}{i}} \\
	& \quad \ftanhclose{ \odot \ftanh{\masl{W}{xc}{i} \vetl{h}{1}{i-1}  +\masl{W}{hc}{i}\vetl{h}{0}{i} + \vesl{b}{c}{i}}} \\
	& = \vetl{h}{1}{i},
\end{align*}
where we used the initialization assumptions $\vehtb{h}{i-1}{i}=\vetl{h}{0}{i}$ and $\vehtb{c}{i-1}{i}=\vetl{c}{0}{i}$ for all $i=1\dots k$, and the definition of the hidden and cell state in Equations \eqref{eq:lstm-cellstate-short} and \eqref{eq:lstm-hiddenstate-short} for $j-1$ blocks, in the previous steps. In particular, we have for layer $k$ that $\vehtb{h}{i}{k}= \vetl{h}{1}{k}$, and using the same transformations as in \eqref{eq:dRNN_proof_yk} with RNNs, we obtain $\veht{y}{k} = \vet{y}{1}$. Furthermore, we obtained that:
\begin{align*}
\vehtb{c}{i}{i} &=   \fsigmoid{\mahsb{W}{xf}{i}\vet{x}{i} + \mahsb{W}{hf}{i}\veht{h}{i-1} + \vehsb{b}{f}{i}} \odot \vehtb{c}{i-1}{i}  \nonumber \\
&\quad+ \fsigmoid{\mahsb{W}{xe}{i}\vet{x}{i} + \mahsb{W}{he}{i}\veht{h}{i-1} + \vehsb{b}{e}{i} } \nonumber \\
&\quad \odot  \ftanh{\mahsb{W}{xc}{i}\vet{x}{i} + \mahsb{W}{hc}{i}\veht{h}{i-1} + \vehsb{b}{c}{i}} \\
%%%%%%%%%%%%%%%%%%%%%%%%%%%%%%%
&=   \fsigmoid{\masl{W}{xf}{i}\vehtb{h}{i-1}{i-1} + \masl{W}{hf}{i}\vehtb{h}{i-1}{i} + \vesl{b}{f}{i}} \odot \vetl{c}{0}{i}  \nonumber \\
&\quad+ \fsigmoid{\masl{W}{xe}{i}\vehtb{h}{i-1}{i-1} + \masl{W}{he}{i}\vehtb{h}{i-1}{i} + \vesl{b}{e}{i} } \nonumber \\
&\quad \odot  \ftanh{\masl{W}{xc}{i}\vehtb{h}{i-1}{i-1} + \masl{W}{hc}{i}\vehtb{h}{i-1}{i} + \vesl{b}{c}{i}} \\
%%%%%%%%%%%
&=\fsigmoid{\masl{W}{xf}{i}\vetl{h}{1}{i-1} + \masl{W}{hf}{i}\vetl{h}{0}{i} + \vesl{b}{f}{i}} \odot \vetl{c}{0}{i}  \nonumber \\
&\quad+ \fsigmoid{\masl{W}{xe}{i}\vetl{h}{1}{i-1} + \masl{W}{he}{i}\vetl{h}{0}{i} + \vesl{b}{e}{i} } \nonumber \\
&\quad \odot  \ftanh{\masl{W}{xc}{i}\vetl{h}{1}{i-1} + \masl{W}{hc}{i}\vetl{h}{0}{i} + \vesl{b}{c}{i}} \\
%%%%%%%%%%%
&=  \vetl{c}{1}{i} 
\end{align*}
Which concludes the basis of the induction.

Next, we assume that $\vehtb{h}{t+i-1}{i}  = \vetl{h}{t}{i}$ and $\vehtb{c}{t+i-1}{i}  = \vetl{c}{t}{i}$  for all $1 \le i \le k $ and $t \le T-1$, and prove that it holds for the hidden and cell states for all layers when $t=T$: $\vehtb{h}{T+i-1}{i}  = \vetl{h}{T}{i}$, $\forall 1\le i \le k$. Without loss of generality, we have for the hidden state \vehtb{h}{T+i-1}{i} in constrained weights single-layer d-LSTM  that,
\begin{align*}
\vehtb{h}{T+i-1}{i} &= \fsigmoid{\mahsb{W}{xo}{i}\vet{x}{T+i-1} + \mahsb{W}{ho}{i}\vehtb{h}{T+i-2}{i} + \vehsb{b}{o}{i}}  \\
& \odot \ftanh{\vehtb{c}{T+i-1}{i}} \\
%%%%
& = \fsigmoid{\masl{W}{xo}{i}\vehtb{h}{T+i-2}{i-1} +\masl{W}{ho}{i}\vehtb{h}{T+i-2}{i} + \vesl{b}{o}{i}}  \\
& \odot \ftanh{\vehtb{c}{T+i-1}{i}} \\
%%%%
& = \fsigmoid{\masl{W}{xo}{i}\vehtb{h}{T+i-2}{i-1} +\masl{W}{ho}{i}\vehtb{h}{T+i-2}{i} + \vesl{b}{o}{i}}  \\ 
& \quad \odot \ftanhopen{
	\fsigmoid{\masl{W}{xf}{i}\vehtb{h}{T+i-2}{i-1} +\masl{W}{hf}{i}\vehtb{h}{T+i-2}{i} + \vesl{b}{f}{i}}} \\
& \quad \odot \vehtb{c}{T+i-2}{i}  \\
& \quad	+\fsigmoid{\masl{W}{xe}{i}\vehtb{h}{T+i-2}{i-1} +\masl{W}{he}{i}\vehtb{h}{T+i-2}{i} + \vesl{b}{e}{i}} \\ 
& \quad \ftanhclose{\odot \ftanh{\masl{W}{xc}{i}\vehtb{h}{T+i-2}{i-1} +\masl{W}{hc}{i}\vehtb{h}{T+i-2}{i} + \vesl{b}{c}{i}} } \\
& = \dots % \\
%%%%%%%
\end{align*}
\begin{align*}
%%%%%%%
& = \fsigopen{\masl{W}{xo}{i} \dots \left\{	\fsigopen{\masl{W}{xo}{j} \left[ \dots \right.  } \right. } \\
& \; \qquad \fsigopen{\masl{W}{xo}{2} \fsigmoid{\masl{W}{xo}{1}\vet{x}{T} + \masl{W}{ho}{1}\vehtb{h}{T-1}{1} + \vesl{b}{o}{1}} } \\
& \enspace \qquad \odot \ftanhopen{ \fsigmoid{\masl{W}{xf}{1}\vet{x}{T} +\masl{W}{hf}{1}\vehtb{h}{T-1}{1} + \vesl{b}{f}{1}}} \\
& \enspace \qquad  \odot \vehtb{c}{T-1}{1} + \fsigmoid{\masl{W}{xe}{1}\vet{x}{T} +\masl{W}{he}{1}\vehtb{h}{T-1}{1} + \vesl{b}{e}{1}} \\
& \enspace \qquad \ftanhclose{ \odot \ftanh{\masl{W}{xc}{1}\vet{x}{T} +\masl{W}{hc}{1}\vehtb{h}{T-1}{1} + \vesl{b}{c}{1}} } \\
& \; \qquad \fsigclose{+ \masl{W}{ho}{2}\vehtb{h}{T}{2} + \vesl{b}{o}{2}} \\
& \; \qquad \odot \ftanhopen{ \fsigmoid{\masl{W}{xf}{2}\left(\dots \right)  +\masl{W}{hf}{2}\vehtb{h}{T}{2} + \vesl{b}{f}{2}} } \\
& \; \qquad \odot \vehtb{c}{T}{2} + \fsigmoid{\masl{W}{xe}{2}\left(\dots\right) +\masl{W}{he}{2}\vehtb{h}{T}{2} + \vesl{b}{e}{2}} \\
& \; \qquad \ftanhclose{\odot \ftanh{ \masl{W}{xc}{2}\left(\dots\right) +\masl{W}{hc}{2}\vehtb{h}{T}{2} + \vesl{b}{c}{2}}}  \\
& \enspace \quad \fsigclose{ \left. \dots \right] +\masl{W}{ho}{j}\vehtb{h}{T+j-2}{j} + \vesl{b}{o}{j}}  \\
& \enspace \quad \odot \ftanhopen{ \fsigmoid{\masl{W}{xf}{j} \left[\dots \right] +\masl{W}{hf}{j}\vehtb{h}{T+j-2}{j} + \vesl{b}{f}{j}} } \\
& \enspace \quad \odot \vehtb{c}{T+j-2}{j} + \fsigmoid{\masl{W}{xe}{j} \left[\dots \right]  +\masl{W}{he}{j}\vehtb{h}{T+j-2}{j} + \vesl{b}{e}{j}} \\
& \enspace \quad  \left. \ftanhclose{ \odot \ftanh{\masl{W}{xc}{j} \left[\dots \right]  +\masl{W}{hc}{j}\vehtb{h}{T+j-2}{j} + \vesl{b}{c}{j}} }  \right\}  \\
& \quad \dots  +\masl{W}{ho}{i}\vehtb{h}{T+i-2}{i} + \vesl{b}{o}{i}   \\
& \quad \odot \ftanhopen{ \fsigmoid{\masl{W}{xf}{i} \left( \dots \right) +\masl{W}{hf}{i}\vehtb{h}{T+i-2}{i} + \vesl{b}{f}{i}} } \\
& \quad	\odot \vehtb{c}{T+i-2}{i} + \fsigmoid{\masl{W}{xe}{i} \left( \dots\right)  +\masl{W}{he}{i}\vehtb{h}{T+i-2}{i} + \vesl{b}{e}{i}} \\
& \quad \ftanhclose{ \odot \ftanh{\masl{W}{xc}{i} \left(\dots \right)  +\masl{W}{hc}{i}\vehtb{h}{T+i-2}{i} + \vesl{b}{c}{i}}}
\end{align*}
From the inductive assumption we have that $\vehtb{h}{T+j-2}{j}  = \vetl{h}{T-1}{j}$ and $\vehtb{c}{T+j-2}{j}  = \vetl{c}{T-1}{j}$   for all $1 \le j \le k $, then it follows that
\begin{align*}
%%%%%%%
& = \fsigopen{\masl{W}{xo}{i} \dots \left\{	\fsigopen{\masl{W}{xo}{j} \left[ \dots \right.  } \right. } \\
& \; \qquad \fsigopen{\masl{W}{xo}{2} \fsigmoid{\masl{W}{xo}{1}\vet{x}{T} + \masl{W}{ho}{1}\vetl{h}{T-1}{1} + \vesl{b}{o}{1}} } \\
& \enspace \qquad \odot \ftanhopen{ \fsigmoid{\masl{W}{xf}{1}\vet{x}{T} +\masl{W}{hf}{1}\vetl{h}{T-1}{1} + \vesl{b}{f}{1}} \odot 	\vetl{c}{T-1}{1} } \\
& \enspace \qquad + \fsigmoid{\masl{W}{xe}{1}\vet{x}{T} +\masl{W}{he}{1}\vetl{h}{T-1}{1} + \vesl{b}{e}{1}} \\
& \enspace \qquad \ftanhclose{ \odot \ftanh{\masl{W}{xc}{1}\vet{x}{T} +\masl{W}{hc}{1}\vetl{h}{T-1}{1} + \vesl{b}{c}{1}} } \\
& \; \qquad \fsigclose{+ \masl{W}{ho}{2}\vetl{h}{T-1}{2} + \vesl{b}{o}{2}} \\
& \; \qquad \odot \ftanhopen{ \fsigmoid{\masl{W}{xf}{2}\left(\dots \right)  +\masl{W}{hf}{2}\vetl{h}{T-1}{2} + \vesl{b}{f}{2}}  \odot \vetl{c}{T-1}{2}} \\
& \; \qquad + \fsigmoid{\masl{W}{xe}{2}\left(\dots\right) +\masl{W}{he}{2}\vetl{h}{T-1}{2} + \vesl{b}{e}{2}} \\
& \; \qquad \ftanhclose{\odot \ftanh{ \masl{W}{xc}{2}\left(\dots\right) +\masl{W}{hc}{2}\vetl{h}{T-1}{2} + \vesl{b}{c}{2}}}  \\
& \enspace \quad \fsigclose{ \left. \dots \right] +\masl{W}{ho}{j}\vetl{h}{T-1}{j} + \vesl{b}{o}{j}}  \\
& \enspace \quad \odot \ftanhopen{ \fsigmoid{\masl{W}{xf}{j} \left[\dots \right] +\masl{W}{hf}{j}\vetl{h}{T-1}{j} + \vesl{b}{f}{j}} \odot \vetl{c}{T-1}{j} } \\ 
& \enspace \quad + \fsigmoid{\masl{W}{xe}{j} \left[\dots \right]  +\masl{W}{he}{j}\vetl{h}{T-1}{j} + \vesl{b}{e}{j}} \\
& \enspace \quad  \left. \ftanhclose{ \odot \ftanh{\masl{W}{xc}{j} \left[\dots \right]  +\masl{W}{hc}{j}\vetl{h}{T-1}{j} + \vesl{b}{c}{j}} }  \right\}  \\
& \quad \dots  +\masl{W}{ho}{i}\vetl{h}{T-1}{i} + \vesl{b}{o}{i}   \\
& \quad \odot \ftanhopen{ \fsigmoid{\masl{W}{xf}{i} \left( \dots \right) +\masl{W}{hf}{i}\vetl{h}{T-1}{i} + \vesl{b}{f}{i}} \odot 	\vetl{c}{T-1}{i} } \\
& \quad	+ \fsigmoid{\masl{W}{xe}{i} \left( \dots\right)  +\masl{W}{he}{i}\vetl{h}{T-1}{i} + \vesl{b}{e}{i}} \\
& \quad \ftanhclose{ \odot \ftanh{\masl{W}{xc}{i} \left(\dots \right)  +\masl{W}{hc}{i}\vetl{h}{T-1}{i} + \vesl{b}{c}{i}}}
\\
%%%%%%%%%%%%%%%%%%
& = \fsigopen{\masl{W}{xo}{i} \dots \left\{ \fsigopen{\masl{W}{xo}{j} \left[\dots  \right.  } \right. } \\
& \; \qquad \fsigmoid{\masl{W}{xo}{2}\vetl{h}{T}{1} +  \masl{W}{ho}{2}\vetl{h}{T-1}{2} + \vesl{b}{o}{2}} \\
& \; \qquad \odot \ftanhopen{ \fsigmoid{\masl{W}{xf}{2}\vetl{h}{T}{1} +\masl{W}{hf}{2}\vetl{h}{T-1}{2} + \vesl{b}{f}{2}} \odot \vetl{c}{T-1}{2}} \\
& \; \qquad + \fsigmoid{\masl{W}{xe}{2}\vetl{h}{T}{1} +\masl{W}{he}{2}\vetl{h}{T-1}{2} + \vesl{b}{e}{2}} \\
& \; \qquad \ftanhclose{\odot \ftanh{ \masl{W}{xc}{2}\vetl{h}{T}{1} +\masl{W}{hc}{2}\vetl{h}{T-1}{2} + \vesl{b}{c}{2}}}  \\
& \enspace \quad \fsigclose{ \left. \dots \right] +\masl{W}{ho}{j}\vetl{h}{T-1}{j} + \vesl{b}{o}{j}}  \\
& \enspace \quad \odot \ftanhopen{ \fsigmoid{\masl{W}{xf}{j} \left[\dots \right] +\masl{W}{hf}{j}\vetl{h}{T-1}{j} + \vesl{b}{f}{j}}  \odot \vetl{c}{T-1}{j}} \\ 
& \enspace \quad + \fsigmoid{\masl{W}{xe}{j} \left[\dots \right]  +\masl{W}{he}{j}\vetl{h}{T-1}{j} + \vesl{b}{e}{j}} \\
& \enspace \quad \left. \ftanhclose{ \odot \ftanh{\masl{W}{xc}{j} \left[\dots \right]  +\masl{W}{hc}{j} \vetl{h}{T-1}{j} + \vesl{b}{c}{j}}	} \right\}  \\
& \quad \dots +\masl{W}{ho}{i}\vetl{h}{T-1}{i} + \vesl{b}{o}{i}  \\
& \quad \odot \ftanhopen{ \fsigmoid{\masl{W}{xf}{i} \left( \dots \right) +\masl{W}{hf}{i}\vetl{h}{T-1}{i} + \vesl{b}{f}{i}} \odot \vetl{c}{T-1}{i}} \\
& \quad + \fsigmoid{\masl{W}{xe}{i} \left( \dots\right)  +\masl{W}{he}{i}\vetl{h}{T-1}{i} + \vesl{b}{e}{i}} \\
& \quad \ftanhclose{ \odot \ftanh{\masl{W}{xc}{i} \left(\dots \right)  +\masl{W}{hc}{i}\vetl{h}{T-1}{i} + \vesl{b}{c}{i}}} %\\
%%%%%%%%%%%%%%%%%%
\end{align*}
\begin{align*}
& = \dots  \\
%%%%%%%%%%%%%%%%%%5
& = \fsigopen{\masl{W}{xo}{i} \dots \left\{  \fsigmoid{\masl{W}{xo}{j} \vetl{h}{T}{j-1} + \masl{W}{ho}{j}\vetl{h}{T-1}{j} + \vesl{b}{o}{j}} \right. }  \\
& \qquad \odot \ftanhopen{ \fsigmoid{\masl{W}{xf}{j} \vetl{h}{T}{j-1}+\masl{W}{hf}{j}\vetl{h}{T-1}{j} + \vesl{b}{f}{j}} } \\
& \qquad \odot \vetl{c}{T-1}{j} + \fsigmoid{\masl{W}{xe}{j}\vetl{h}{T}{j-1}  +\masl{W}{he}{j}\vetl{h}{T-1}{j} + \vesl{b}{e}{j}} \\
& \qquad \left.\ftanhclose{ \odot \ftanh{\masl{W}{xc}{j}\vetl{h}{T}{j-1}  +\masl{W}{hc}{j}\vetl{h}{T-1}{j} + \vesl{b}{c}{j}} } \right\}  \\
& \quad \dots +\masl{W}{ho}{i}\vetl{h}{T-1}{i} + \vesl{b}{o}{i}  \\
& \quad \odot \ftanhopen{ \fsigmoid{\masl{W}{xf}{i} \left( \dots \right) +\masl{W}{hf}{i}\vetl{h}{T-1}{i} + \vesl{b}{f}{i}} } \\
& \quad \odot 	\vetl{c}{T-1}{i}+ \fsigmoid{\masl{W}{xe}{i} \left( \dots\right)  +\masl{W}{he}{i}\vetl{h}{T-1}{i} + \vesl{b}{e}{i}} \\
& \quad \ftanhclose{ \odot \ftanh{\masl{W}{xc}{i} \left(\dots \right)  +\masl{W}{hc}{i}\vetl{h}{T-1}{i} + \vesl{b}{c}{i}}} \\
%%%%%%%%%%%%%%%%%%5
& = \dots  \\
%%%%%%%%%%%%%%%%%%5
& = \fsigmoid{\masl{W}{xo}{i}\vetl{h}{T}{i-1} + \masl{W}{ho}{i}\vetl{h}{T-1}{i} + \vesl{b}{o}{i}} \\
& \quad \odot \ftanhopen{ \fsigmoid{\masl{W}{xf}{i}\vetl{h}{T}{i-1} + \masl{W}{hf}{i}\vetl{h}{T-1}{i} + \vesl{b}{f}{i}} } \\
& \quad \odot \vetl{c}{T-1}{i} + \fsigmoid{\masl{W}{xe}{i} \vetl{h}{T}{i-1} +\masl{W}{he}{i}\vetl{h}{T-1}{i} + \vesl{b}{e}{i}} \\
& \quad \ftanhclose{ \odot \ftanh{\masl{W}{xc}{i} \vetl{h}{T}{i-1}  +\masl{W}{hc}{i}\vetl{h}{T-1}{i} + \vesl{b}{c}{i}}} \\
%%%%%%%%%%%%%%%%%%5
& = \vetl{h}{T}{i},
\end{align*}
where we use the recurrent definition of the hidden and cell states in Equations \eqref{eq:lstm-cellstate-short} and \eqref{eq:lstm-hiddenstate-short}.
In particular, we obtained for $i=k$ that $\vehtb{h}{T+k-1}{k}=\vetl{h}{T}{k}$.
Applying the same steps as in the d-RNN proof in Eq. \eqref{eq:proof_steps_y}, we obtain $\veht{y}{T+k-1} = \vet{y}{T}$.
Last, we obtain for the cell state that
\begin{align*}
\vehtb{c}{T+i-1}{i} &=   \fsigmoid{\mahsb{W}{xf}{i}\vet{x}{T+i-1} + \mahsb{W}{hf}{i}\veht{h}{T+i-2} + \vehsb{b}{f}{i}} \odot \vehtb{c}{T+i-2}{i}  \nonumber \\
&\quad+ \fsigmoid{\mahsb{W}{xe}{i}\vet{x}{T+i-1} + \mahsb{W}{he}{i}\veht{h}{T+i-2} + \vehsb{b}{e}{i} } \nonumber \\
&\quad \odot  \ftanh{\mahsb{W}{xc}{i}\vet{x}{T+i-1} + \mahsb{W}{hc}{i}\veht{h}{T+i-2} + \vehsb{b}{c}{i}} \\
%%%%%%%%%%%%%%%%%%%%%%%%%%%%%%%
&= \fsigmoid{\masl{W}{xf}{i}\vehtb{h}{T+i-2}{i-1} + \masl{W}{hf}{i}\vehtb{h}{T+i-2}{i} + \vesl{b}{f}{i}} \odot \vetl{c}{T-1}{i}  \nonumber \\
&\quad+ \fsigmoid{\masl{W}{xe}{i}\vehtb{h}{T+i-2}{i-1} + \masl{W}{he}{i}\vehtb{h}{T+i-2}{i} + \vesl{b}{e}{i} } \nonumber \\
&\quad \odot  \ftanh{\masl{W}{xc}{i}\vehtb{h}{T+i-2}{i-1} + \masl{W}{hc}{i}\vehtb{h}{T+i-2}{i} + \vesl{b}{c}{i}} \\
%%%%%%%%%%%
&=   \fsigmoid{\masl{W}{xf}{i}\vetl{h}{T}{i-1} + \masl{W}{hf}{i}\vetl{h}{T-1}{i} + \vesl{b}{f}{i}} \odot \vetl{c}{T-1}{i}  \nonumber \\
&\quad+ \fsigmoid{\masl{W}{xe}{i}\vetl{h}{T}{i-1} + \masl{W}{he}{i}\vetl{h}{T-1}{i} + \vesl{b}{e}{i} } \nonumber \\
&\quad \odot  \ftanh{\masl{W}{xc}{i}\vetl{h}{T}{i-1} + \masl{W}{hc}{i}\vetl{h}{T-1}{i} + \vesl{b}{c}{i}} \\
%%%%%%%%%%%
&=  \vetl{c}{T}{i} 
\end{align*}

Which completes the proof.
\end{proof}

\section{Weight Constraints and Connections in d-RNN}
\begin{figure*}[tb]%
\vskip 0.2in
\begin{center}
	\centerline{\includegraphics[width=\textwidth]{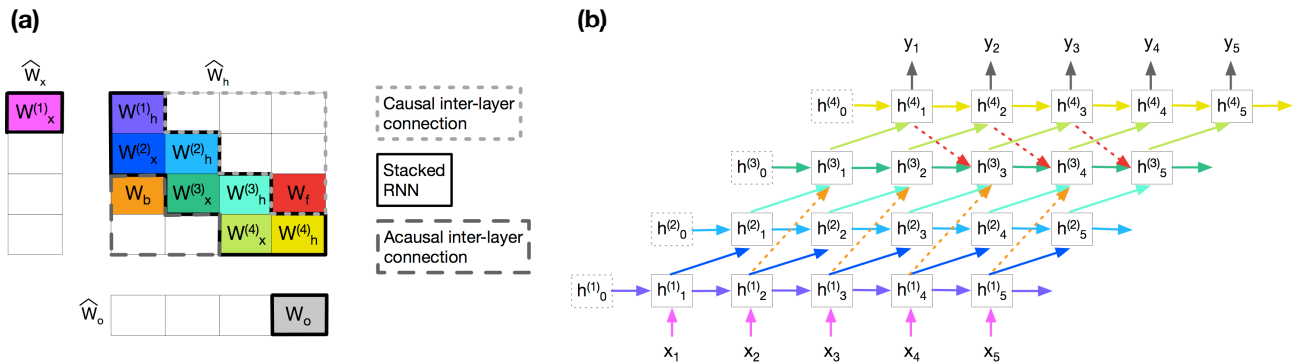}}
\caption{(a) Weights of the single-layer and weight constrained d-RNN that are equivalent to connections in the stacked RNN from Figure 2.%\ref{fig:WeightMatrix}
\ (b) Connections in the d-RNN based on the weight matrix in (a). The d-RNN is depicted as it would be the stacked RNN. The hidden states are delayed in time with respect to the stacked network.}
\label{fig:WeightMatrix2}
\end{center}
\vskip -0.2in
\end{figure*}
Figure \ref{fig:WeightMatrix2} shows the weight constraints imposed to achieve equivalence between the stacked RNN and single-layer d-RNN, and a visualization of the d-RNN as connections in the stacked RNN. Figure \ref{fig:WeightMatrix2}(b) depicts the delay (or ``shift'') of all the hidden states as they would be computed in the stacked RNN. Each layer is equivalent to a shift by one timestep. 

\section{Additional Plots for Error Maps}
Figure \ref{fig:sinus-std} present the standard deviation diagrams for the error maps in Figure 5. % \ref{fig:sinus}.

\begin{figure*}[tb]
\vskip 0.2in
\begin{center}
%	\centerline{
\subfigure[LSTM]{\label{fig:sinus-LSTM-std} \includegraphics[width=0.35\linewidth]{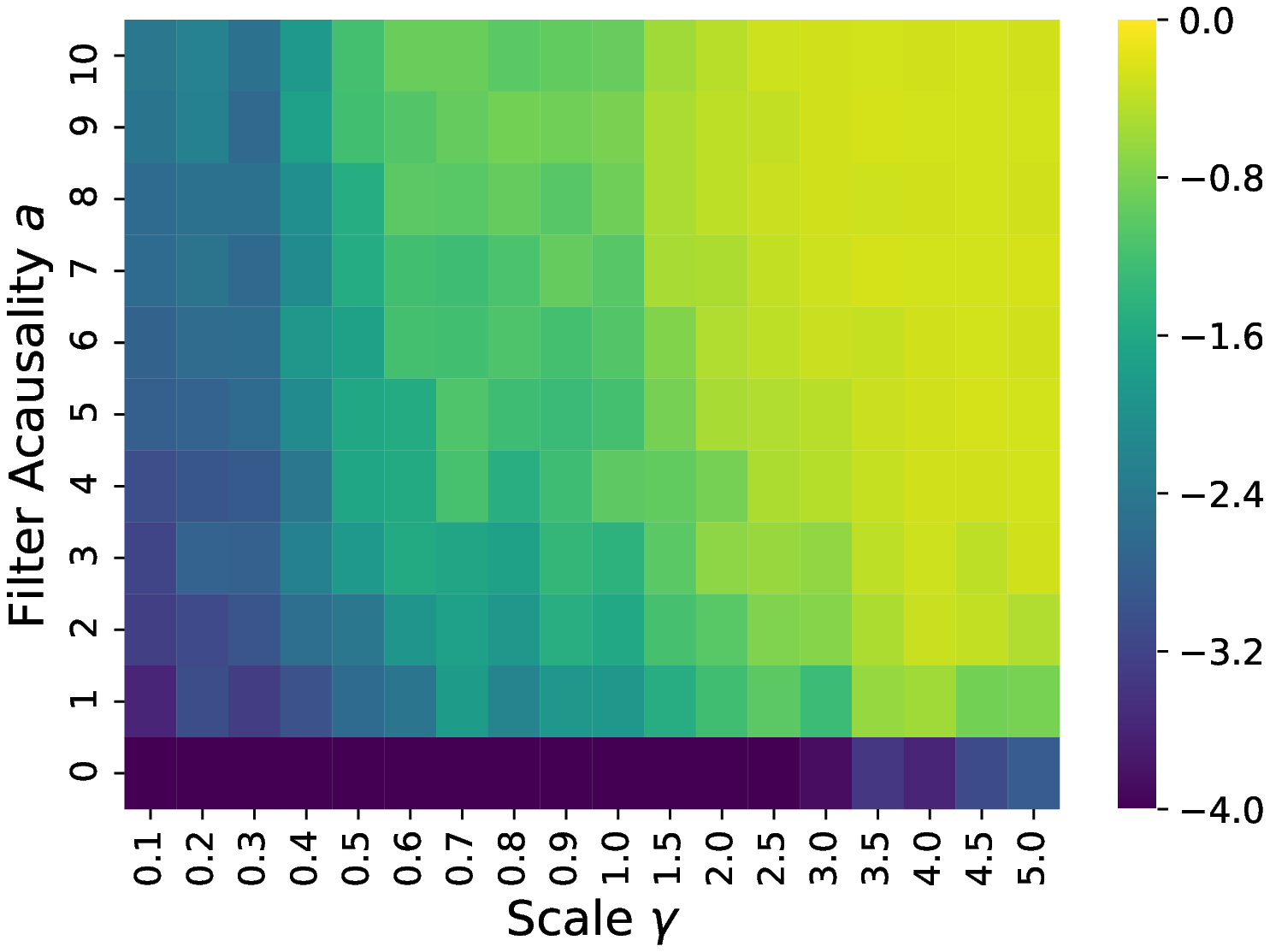} \includegraphics[width=0.35\linewidth]{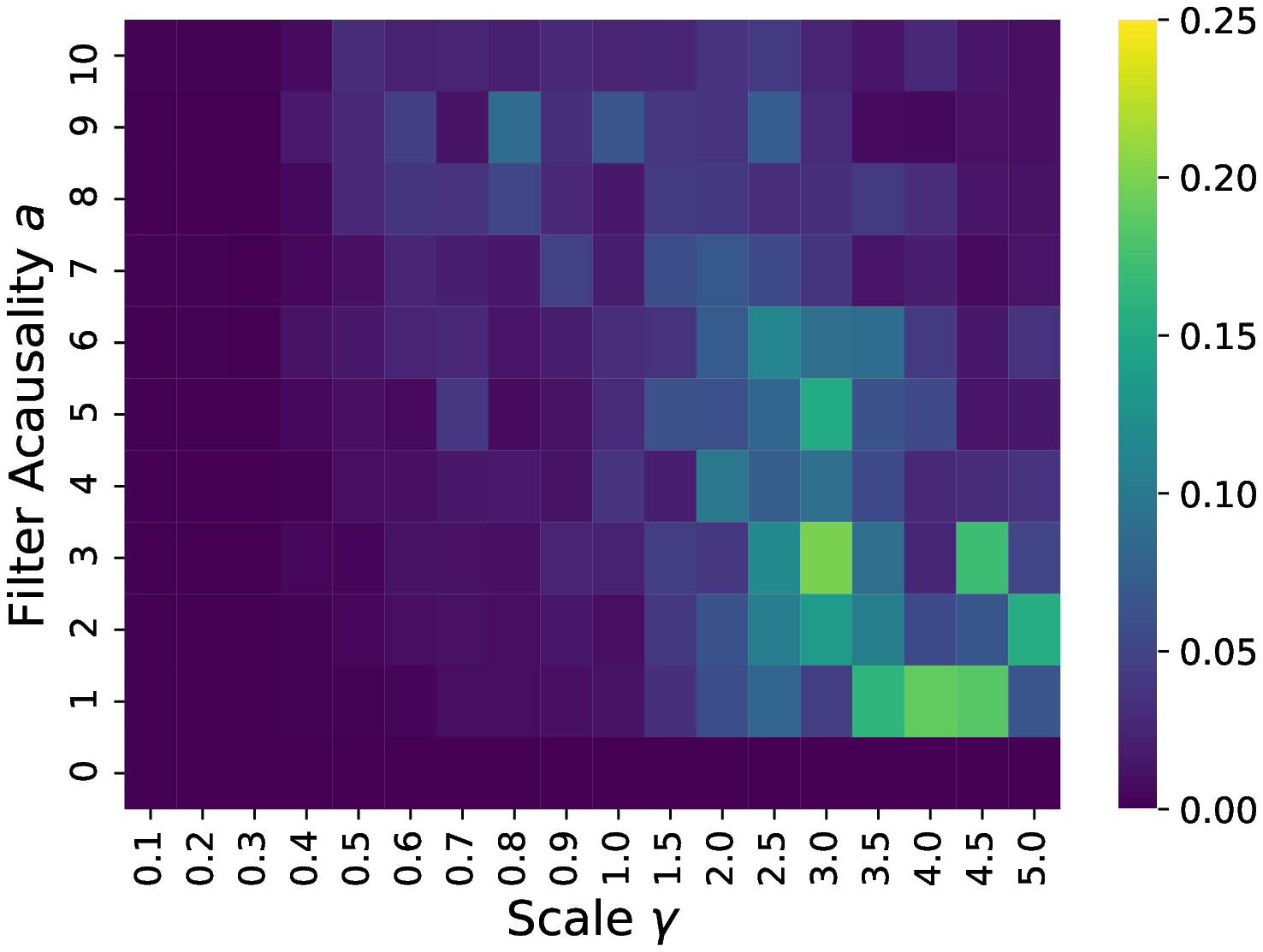}} \\
\subfigure[Bi-LSTM]{\label{fig:sinus-BiLSTM-std}\includegraphics[width=0.35\linewidth]{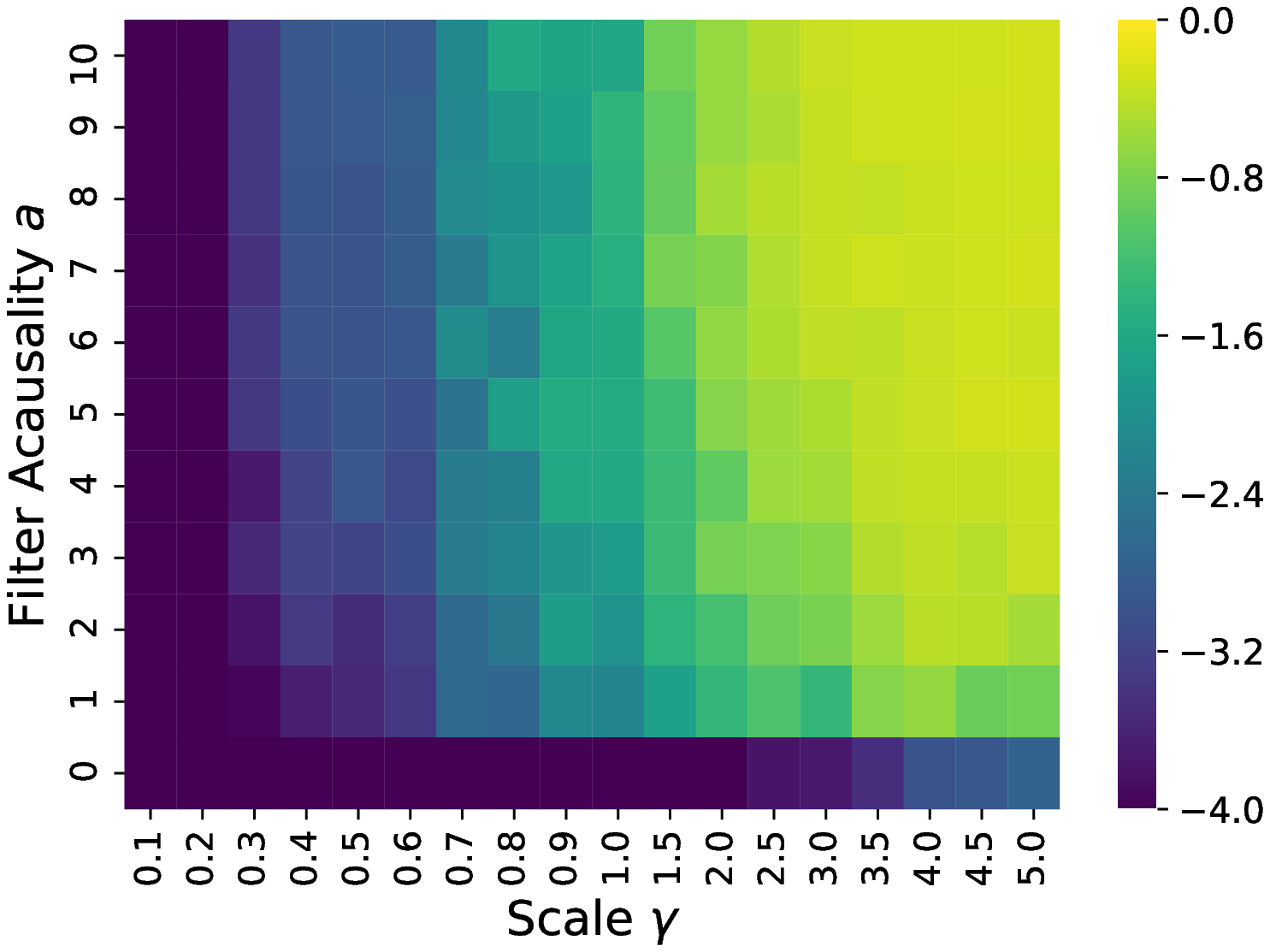}\includegraphics[width=0.35\linewidth]{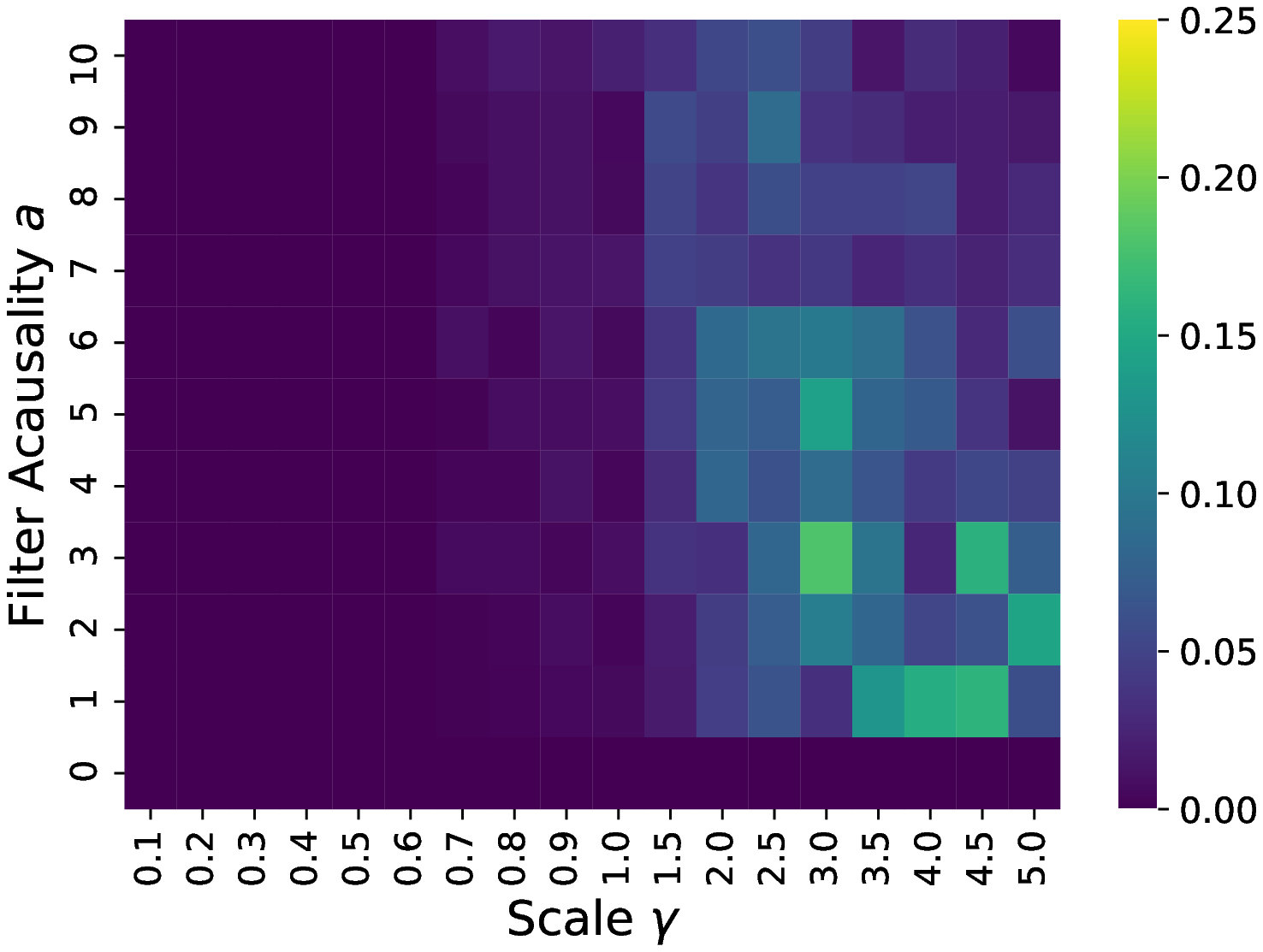}}\\ 
\subfigure[d-LSTM with delay=5]{\label{fig:sinus-dLSTM-5-std}\includegraphics[width=0.35\linewidth]{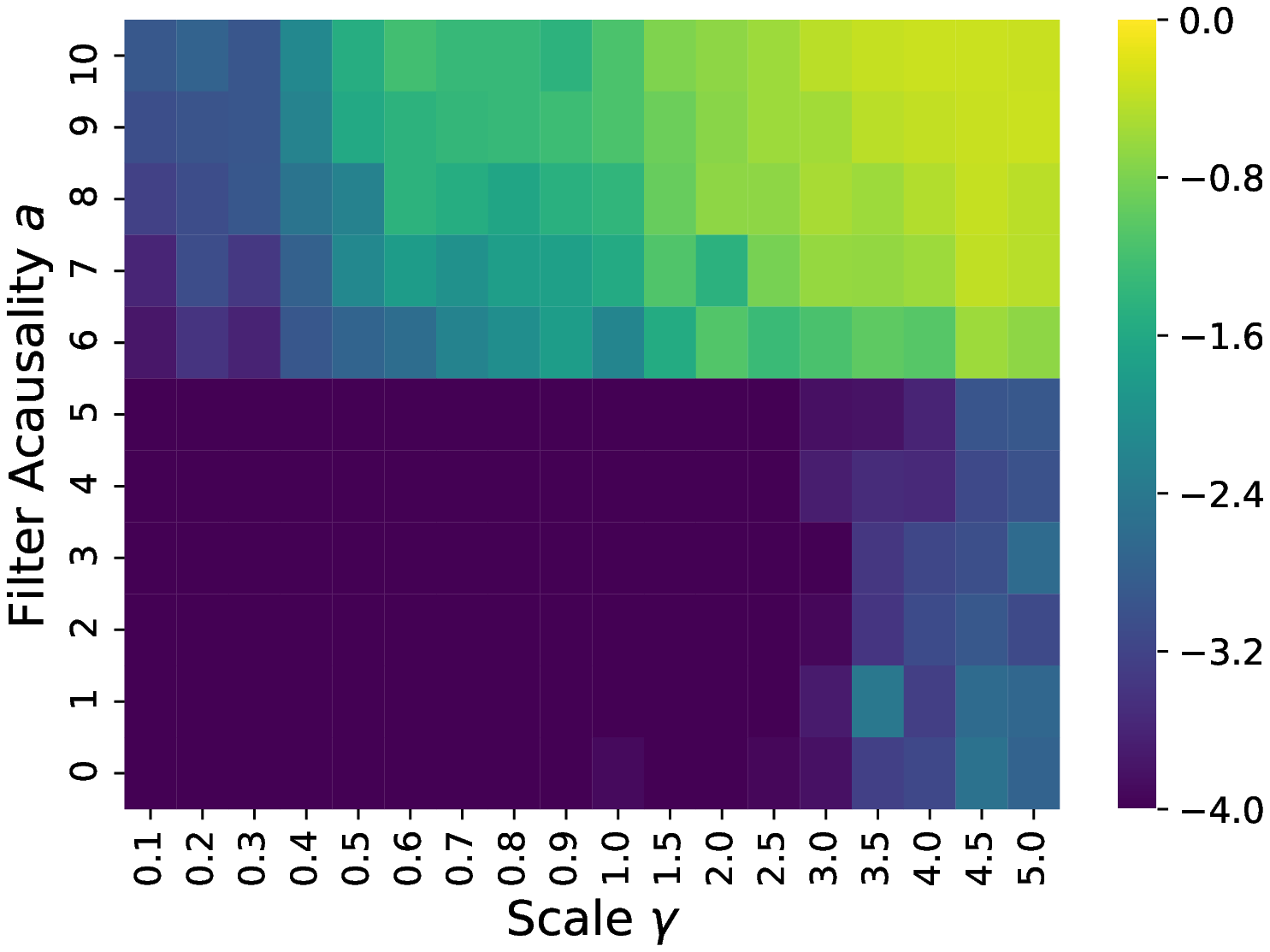}\includegraphics[width=0.35\linewidth]{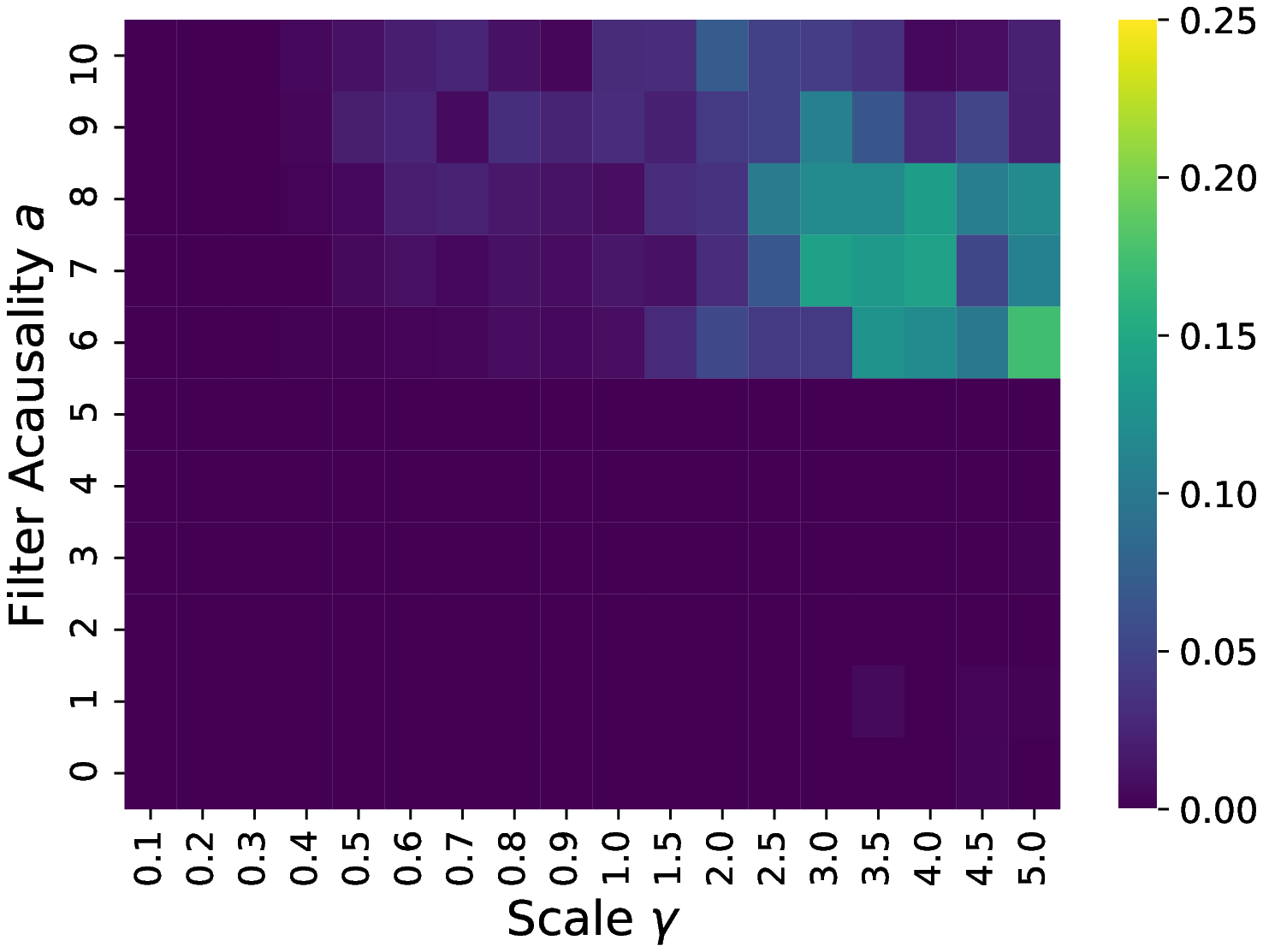}} \\
\subfigure[d-LSTM with delay=10]{\label{fig:sinus-dLSTM-10-std}\includegraphics[width=0.35\linewidth]{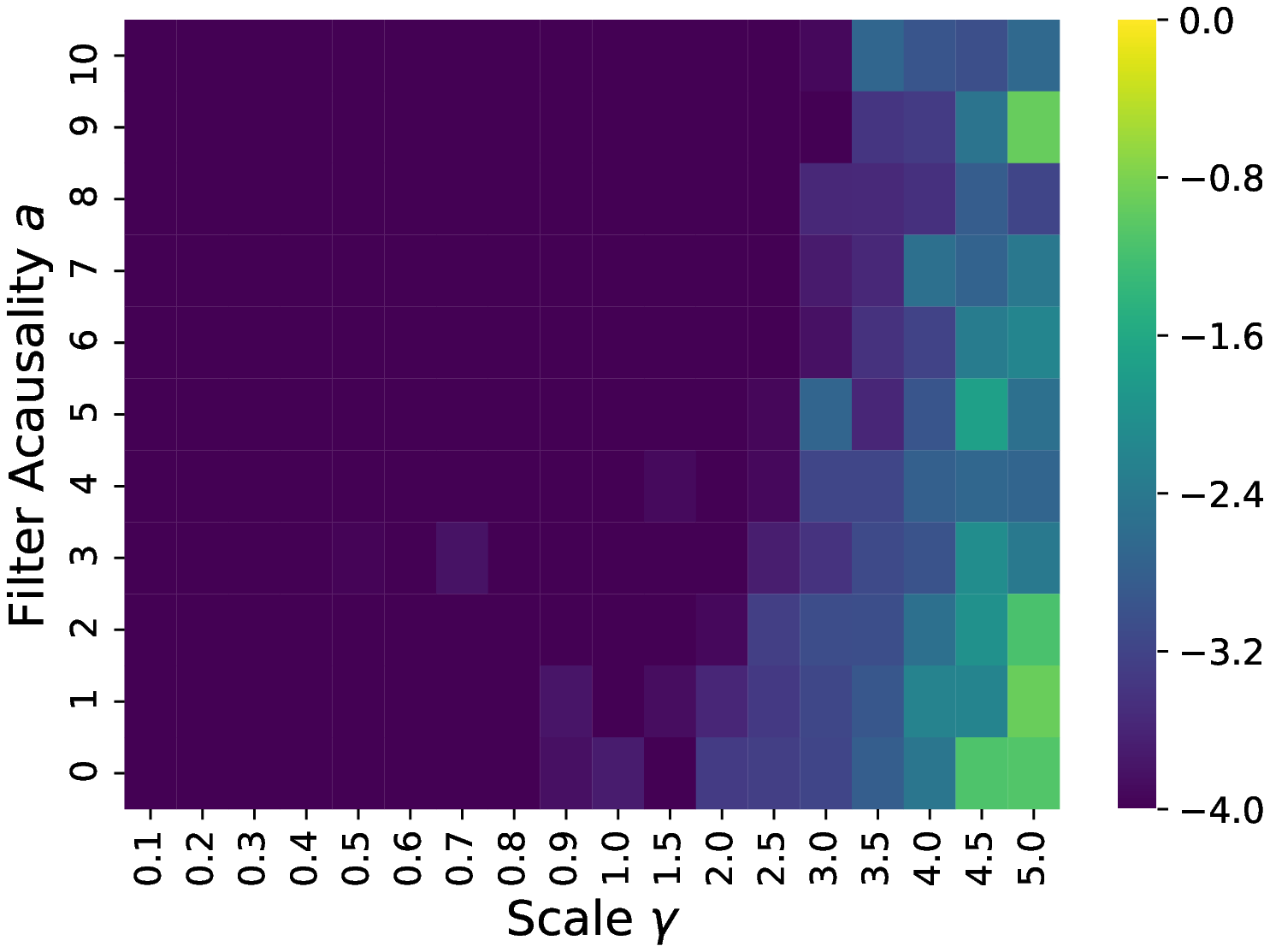}\includegraphics[width=0.35\linewidth]{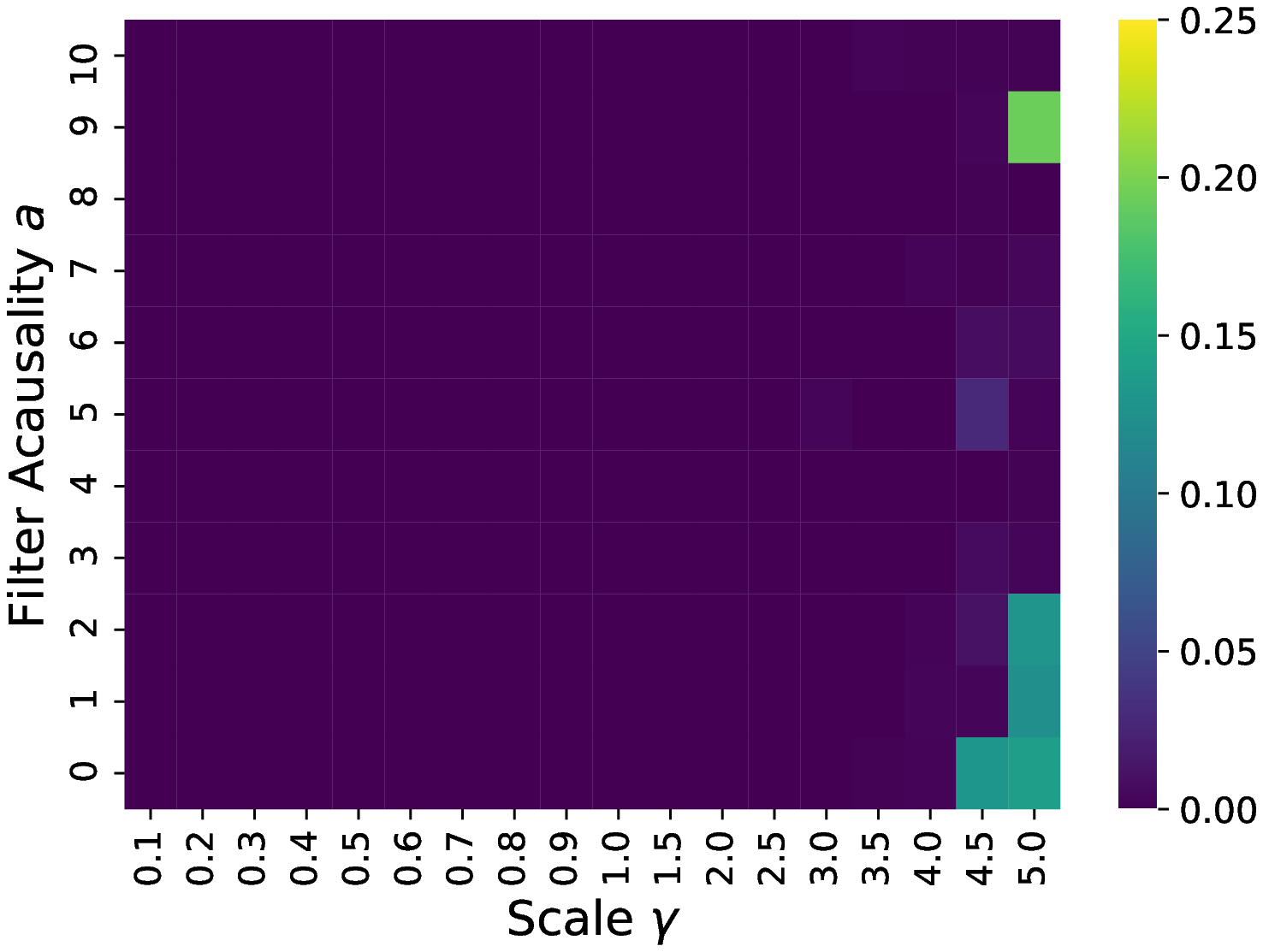}} \\ %}
	\caption{Error maps presented in Figure 4 (left column) together with their standard deviation figures.}
\label{fig:sinus-std}
\end{center}
\vskip -0.2in
\end{figure*}

\section{Masked Character-Level Language Modeling: Additional Results}
In Table \ref{tbl:mlm}, we include additional results for smaller networks of the masked language model task. We sampled more delay values for d-LSTMs, but the general conclusions remain the same: intermediate values of delay achieve the lowest BPC. Forward-pass runtimes across delay values show a small increase with larger delays, but the increment is relatively flat compared to stacked LSTMs or (stacked) Bi-LSTMs as they increase in depth. For these experiments, we also used a batch of 128 sequences, and an embedding of dimension 10. 

\begin{table*}[tb]
	\caption{Performance for smaller networks on the masked character-level language modeling task. Mean and standard deviation values are computed over 5 repetitions of training and inference runtime on the test set.}
	\label{tbl:mlm}
	\vskip 0.1in
	\begin{center}
		\begin{small}
			\begin{sc}
				\begin{tabular}{lccccccr}
					\toprule
					Model & Layers & Delay & Units & Params. & Val. BPC & Test BPC & Runtime  \\
					\midrule
				        LSTM & 1 &  - &  512 & 1087283 & $2.139 \pm 0.005$ & $\mathbf{2.195 \pm 0.002}$ & $ 2.85ms \pm 0.14$ \\
				        LSTM & 2 &  - &  298 & 1090689 & $2.156 \pm 0.003$ & $2.215 \pm 0.002$ & $ 6.69ms \pm 0.27$ \\
				        LSTM & 5 &  - &  172 & 1083735 & $2.199 \pm 0.016$ & $2.255 \pm 0.015$ & $11.32ms \pm 0.05$ \\
				        \midrule
				     Bi-LSTM & 1 &  - &  360 & 1091107 & $1.130 \pm 0.003$ & $1.187 \pm 0.004$ & $ 5.82ms \pm 0.18$ \\
				     Bi-LSTM & 2 &  - &  182 & 1090487 & $0.800 \pm 0.004$ & $0.846 \pm 0.005$ & $11.08ms \pm 0.59$ \\
				     Bi-LSTM & 5 &  - &  102 & 1104151 & $0.796 \pm 0.007$ & $\mathbf{0.841 \pm 0.006}$ & $23.94ms \pm 0.17$ \\
				        \midrule
				      d-LSTM & 1 &  1 &  512 & 1087283 & $1.470 \pm 0.002$ & $1.518 \pm 0.003$ & $ 2.80ms \pm 0.02$ \\
				     d-LSTM & 1 &  2 &  512 & 1087283 & $1.162 \pm 0.004$ & $1.208 \pm 0.003$ & $ 2.81ms \pm 0.01$ \\
				     d-LSTM & 1 &  3 &  512 & 1087283 & $0.995 \pm 0.002$ & $1.039 \pm 0.002$ & $ 3.02ms \pm 0.23$ \\
				      d-LSTM & 1 &  5 &  512 & 1087283 & $0.877 \pm 0.001$ & $0.920 \pm 0.003$ & $ 3.01ms \pm 0.22$ \\
				      d-LSTM & 1 &  8 &  512 & 1087283 & $0.859 \pm 0.002$ & $\mathbf{0.905 \pm 0.003}$ & $ 3.04ms \pm 0.19$ \\
				      d-LSTM & 1 & 10 &  512 & 1087283 & $0.889 \pm 0.004$ & $0.935 \pm 0.005$ & $ 3.22ms \pm 0.18$ \\
				     d-LSTM & 1 & 15 &  512 & 1087283 & $0.971 \pm 0.004$ & $1.014 \pm 0.002$ & $ 3.17ms \pm 0.05$ \\
					\bottomrule
				\end{tabular}
			\end{sc}
		\end{small}
	\end{center}
	\vskip -0.1in
\end{table*}

\section{Part-of-Speech Tagging: Additional Details and Results}
In this section, we include more details about the dataset and the results of all the combinations for the Parts-Of-Speech experiment. 
We used treebanks from Universal Dependencies (UD)~\cite{UD} version 2.3. We selected the English EWT treebank\footnote{\url{https://github.com/UniversalDependencies/UD_English-EWT/tree/r2.3}}~\cite{UDEngEWT} (254,854 words), French GSD treebank\footnote{\url{https://github.com/UniversalDependencies/UD_French-GSD/tree/r2.3}} (411,465 words), and German GSD treebank\footnote{\url{https://github.com/UniversalDependencies/UD_German-GSD/tree/r2.3}} (297,836 words) based on the quality assigned by the UD authors. 
We follow the partitioning onto training, validation and test datasets as pre-defined in UD. All treebanks use the same POS tag set containing 17 tags. We use the Polyglot project~\cite{polyglot} word embeddings (64 dimensions). We build our own alphabets based on the most frequent 100 characters in the vocabularies. All the networks have a 100-dimensional character-level embedding, which is trained with the network. We use a batch size of 32 sentences.

Results for German, English, and French can be found in Tables \ref{tbl:pos-de-full}, \ref{tbl:pos-en-full}, and \ref{tbl:pos-fr-full}, respectively. The best result that does not use a bidirectional network is marked in bold for each language.

\begin{table*}[tb]
	\caption{Parts-of-Speech results for German. The table shows all possible combinations of delays or bidirectional LSTM networks. The best forward-only network is marked in bold.}
	\label{tbl:pos-de-full}
\vskip 0.15in
\begin{center}
	\begin{small}
		\begin{sc}
\begin{tabular}{cccc}
	    \toprule
	 Character-level network & Word-level network & Validation Accuracy & Test Accuracy \\
	    \midrule
Bi-LSTM&Bi-LSTM & $93.88 \pm 0.13$ &  $93.15 \pm 0.08$  \\ 
Bi-LSTM&LSTM &  $92.00 \pm 0.16$ &  $91.50 \pm 0.05$  \\
Bi-LSTM&d-LSTM with delay=1 &  $93.32 \pm 0.23$ &  $92.81 \pm 0.14$  \\
Bi-LSTM&d-LSTM with delay=2 &  $93.15 \pm 0.06$ &  $92.67 \pm 0.08$  \\
Bi-LSTM&d-LSTM with delay=3 &  $92.82 \pm 0.14$ &  $92.25 \pm 0.16$  \\
Bi-LSTM&d-LSTM with delay=4 &  $92.41 \pm 0.12$ &  $91.95 \pm 0.17$  \\
Bi-LSTM&d-LSTM with delay=5 &  $91.86 \pm 0.11$ &  $91.57 \pm 0.20$  \\
	    \midrule
LSTM&Bi-LSTM &  $93.96 \pm 0.12$ &  $93.43 \pm 0.07$  \\ 
LSTM&LSTM &  $92.05 \pm 0.16$ &  $91.58 \pm 0.11$  \\
LSTM&d-LSTM with delay=1 &  $93.46 \pm 0.16$ &  $92.71 \pm 0.11$  \\
LSTM&d-LSTM with delay=2 &  $93.13 \pm 0.10$ &  $92.61 \pm 0.26$  \\
LSTM&d-LSTM with delay=3 &  $92.91 \pm 0.13$ &  $92.38 \pm 0.15$  \\
LSTM&d-LSTM with delay=4 &  $92.56 \pm 0.17$ &  $92.06 \pm 0.19$  \\
	    \midrule
d-LSTM with delay=1&Bi-LSTM &  $93.93 \pm 0.06$ &  $93.39 \pm 0.18$  \\ 
d-LSTM with delay=1&LSTM &  $92.04 \pm 0.11$ &  $91.58 \pm 0.14$  \\
d-LSTM with delay=1&d-LSTM with delay=1 &  $93.48 \pm 0.31$ &  $\mathbf{92.87 \pm 0.24}$  \\
d-LSTM with delay=1&d-LSTM with delay=2 &  $93.11 \pm 0.18$ &  $92.54 \pm 0.08$  \\
d-LSTM with delay=1&d-LSTM with delay=3 &  $92.85 \pm 0.14$ &  $92.28 \pm 0.19$  \\
d-LSTM with delay=1&d-LSTM with delay=4 &  $92.50 \pm 0.12$ &  $92.11 \pm 0.19$  \\
\midrule
d-LSTM with delay=3&Bi-LSTM & $94.00 \pm 0.17$ &  $93.32 \pm 0.18$  \\ 
d-LSTM with delay=3&LSTM &  $92.10 \pm 0.24$ &  $91.61 \pm 0.18$  \\
d-LSTM with delay=3&d-LSTM with delay=1 &  $93.29 \pm 0.09$ &  $92.68 \pm 0.09$  \\
d-LSTM with delay=3&d-LSTM with delay=2 &  $93.09 \pm 0.21$ &  $92.59 \pm 0.16$  \\
d-LSTM with delay=3&d-LSTM with delay=3 &  $92.86 \pm 0.24$ &  $92.42 \pm 0.16$  \\
d-LSTM with delay=3&d-LSTM with delay=4 &  $92.53 \pm 0.17$ &  $92.08 \pm 0.18$  \\
\midrule
d-LSTM with delay=5&Bi-LSTM & $93.88 \pm 0.17$ &  $93.27 \pm 0.06$  \\ 
d-LSTM with delay=5&LSTM &  $91.88 \pm 0.18$ &  $91.54 \pm 0.11$  \\
d-LSTM with delay=5&d-LSTM with delay=1 &  $93.31 \pm 0.14$ &  $92.74 \pm 0.10$  \\
d-LSTM with delay=5&d-LSTM with delay=2 &  $93.17 \pm 0.13$ &  $92.57 \pm 0.17$  \\
d-LSTM with delay=5&d-LSTM with delay=3 &  $92.84 \pm 0.19$ &  $92.25 \pm 0.10$  \\
d-LSTM with delay=5&d-LSTM with delay=4 &  $92.50 \pm 0.22$ &  $91.96 \pm 0.19$  \\
    \bottomrule
\end{tabular}
\end{sc}
\end{small}
\end{center}
\vskip -0.1in
\end{table*}

\begin{table*}[tb]
	\caption{Parts-of-Speech results for English. The table shows all possible combinations of delays or bidirectional LSTM networks. The best forward-only network is marked in bold.}
\label{tbl:pos-en-full}
\vskip 0.15in
\begin{center}
	\begin{small}
		\begin{sc}
\begin{tabular}{cccc}
	\toprule
	Character-level network & Word-level network & Validation Accuracy & Test Accuracy \\
	\midrule
 Bi-LSTM&Bi-LSTM & $94.85 \pm 0.05$ &  $94.84 \pm 0.08$  \\ 
 Bi-LSTM&LSTM &  $91.90 \pm 0.12$ &  $92.05 \pm 0.09$  \\
 Bi-LSTM&d-LSTM with delay=1 &  $94.47 \pm 0.06$ &  $94.41 \pm 0.05$  \\
 Bi-LSTM&d-LSTM with delay=2 &  $94.17 \pm 0.13$ &  $94.14 \pm 0.10$  \\
 Bi-LSTM&d-LSTM with delay=3 &  $93.70 \pm 0.07$ &  $93.87 \pm 0.07$  \\
 Bi-LSTM&d-LSTM with delay=4 &  $93.11 \pm 0.14$ &  $93.26 \pm 0.08$  \\
 Bi-LSTM&d-LSTM with delay=5 &  $92.54 \pm 0.16$ &  $92.70 \pm 0.10$  \\
\midrule	
LSTM&Bi-LSTM & $95.03 \pm 0.14$ &  $94.99 \pm 0.15$  \\ 
LSTM&LSTM &  $92.05 \pm 0.13$ &  $92.14 \pm 0.10$  \\
LSTM&d-LSTM with delay=1 &  $94.53 \pm 0.08$ &  $94.58 \pm 0.11$  \\
LSTM&d-LSTM with delay=2 &  $94.29 \pm 0.05$ &  $94.28 \pm 0.05$  \\
LSTM&d-LSTM with delay=3 &  $93.81 \pm 0.11$ &  $93.85 \pm 0.12$  \\
LSTM&d-LSTM with delay=4 &  $93.39 \pm 0.12$ &  $93.55 \pm 0.10$  \\
\midrule	
d-LSTM with delay=1&Bi-LSTM & $94.94 \pm 0.07$ &  $94.95 \pm 0.06$  \\ 
d-LSTM with delay=1&LSTM &  $91.96 \pm 0.16$ &  $92.09 \pm 0.10$  \\
d-LSTM with delay=1&d-LSTM with delay=1 &  $94.57 \pm 0.08$ &  $\mathbf{94.57 \pm 0.14}$  \\
d-LSTM with delay=1&d-LSTM with delay=2 &  $94.29 \pm 0.12$ &  $94.37 \pm 0.08$  \\
d-LSTM with delay=1&d-LSTM with delay=3 &  $93.86 \pm 0.05$ &  $93.84 \pm 0.10$  \\
d-LSTM with delay=1&d-LSTM with delay=4 &  $93.35 \pm 0.10$ &  $93.56 \pm 0.13$  \\
\midrule
 d-LSTM with delay=3&Bi-LSTM &   $94.98 \pm 0.09$ &  $94.91 \pm 0.10$  \\ 
 d-LSTM with delay=3&LSTM &  $91.96 \pm 0.08$ &  $92.08 \pm 0.10$  \\
 d-LSTM with delay=3&d-LSTM with delay=1 &  $94.47 \pm 0.03$ &  $94.51 \pm 0.10$  \\
 d-LSTM with delay=3&d-LSTM with delay=2 &  $94.21 \pm 0.05$ &  $94.18 \pm 0.03$  \\
 d-LSTM with delay=3&d-LSTM with delay=3 &  $93.80 \pm 0.13$ &  $93.88 \pm 0.13$  \\
 d-LSTM with delay=3&d-LSTM with delay=4 &  $93.23 \pm 0.13$ &  $93.38 \pm 0.11$  \\
\midrule
 d-LSTM with delay=5&Bi-LSTM & $94.90 \pm 0.07$ &  $94.87 \pm 0.09$  \\ 
 d-LSTM with delay=5&LSTM &  $91.84 \pm 0.11$ &  $91.98 \pm 0.20$  \\
 d-LSTM with delay=5&d-LSTM with delay=1 &  $94.36 \pm 0.09$ &  $94.44 \pm 0.08$  \\
 d-LSTM with delay=5&d-LSTM with delay=2 &  $94.05 \pm 0.07$ &  $94.19 \pm 0.05$  \\
 d-LSTM with delay=5&d-LSTM with delay=3 &  $93.61 \pm 0.07$ &  $93.76 \pm 0.05$  \\
 d-LSTM with delay=5&d-LSTM with delay=4 &  $93.14 \pm 0.04$ &  $93.27 \pm 0.12$  \\
	    \bottomrule
\end{tabular}
\end{sc}
\end{small}
\end{center}
\vskip -0.1in
\end{table*}

\begin{table*}[tb]
	\caption{Parts-of-Speech results for French. The table shows all possible combinations of delays or bidirectional LSTM networks. The best forward-only network is marked in bold.}
	\label{tbl:pos-fr-full}
\vskip 0.15in
\begin{center}
	\begin{small}
		\begin{sc}
	\begin{tabular}{cccc}
		\toprule
		Character-level network & Word-level network & Validation Accuracy & Test Accuracy \\
		\midrule
 Bi-LSTM&Bi-LSTM &  $97.63 \pm 0.06$ &  $97.22 \pm 0.11$  \\ 
 Bi-LSTM&LSTM &  $96.67 \pm 0.05$ &  $96.15 \pm 0.17$  \\
 Bi-LSTM&d-LSTM with delay=1 &  $97.48 \pm 0.02$ &  $96.98 \pm 0.05$  \\
 Bi-LSTM&d-LSTM with delay=2 &  $97.41 \pm 0.02$ &  $96.91 \pm 0.12$  \\
 Bi-LSTM&d-LSTM with delay=3 &  $97.31 \pm 0.05$ &  $96.84 \pm 0.09$  \\
 Bi-LSTM&d-LSTM with delay=4 &  $97.12 \pm 0.05$ &  $96.61 \pm 0.06$  \\
Bi-LSTM&d-LSTM with delay=5 &  $96.88 \pm 0.10$ &  $96.20 \pm 0.14$  \\
		\midrule
 LSTM&Bi-LSTM &  $97.70 \pm 0.07$ &  $97.19 \pm 0.09$  \\ 
 LSTM&LSTM &  $96.67 \pm 0.07$ &  $96.10 \pm 0.11$  \\
 LSTM&d-LSTM with delay=1 &  $97.49 \pm 0.07$ &  $97.03 \pm 0.07$  \\
 LSTM&d-LSTM with delay=2 &  $97.49 \pm 0.05$ &  $97.00 \pm 0.06$  \\
LSTM&d-LSTM with delay=3 &  $97.34 \pm 0.04$ &  $96.89 \pm 0.09$  \\
 LSTM&d-LSTM with delay=4 &  $97.16 \pm 0.06$ &  $96.66 \pm 0.15$  \\
		\midrule
d-LSTM with delay=1&Bi-LSTM & $97.67 \pm 0.07$ &  $97.23 \pm 0.12$  \\ 
 d-LSTM with delay=1&LSTM &  $96.66 \pm 0.06$ &  $95.97 \pm 0.07$  \\
d-LSTM with delay=1&d-LSTM with delay=1 &  $97.49 \pm 0.04$ &  $\mathbf{97.04 \pm 0.13}$  \\
 d-LSTM with delay=1&d-LSTM with delay=2 &  $97.43 \pm 0.05$ &  $96.98 \pm 0.05$  \\
d-LSTM with delay=1&d-LSTM with delay=3 &  $97.36 \pm 0.08$ &  $96.80 \pm 0.10$  \\
 d-LSTM with delay=1&d-LSTM with delay=4 &  $97.22 \pm 0.06$ &  $96.57 \pm 0.10$  \\
		\midrule
 d-LSTM with delay=3&Bi-LSTM &  $97.67 \pm 0.08$ &  $97.21 \pm 0.08$  \\ 
 d-LSTM with delay=3&LSTM &  $96.67 \pm 0.07$ &  $95.98 \pm 0.14$  \\
 d-LSTM with delay=3&d-LSTM with delay=1 &  $97.52 \pm 0.04$ &  $97.02 \pm 0.09$  \\
 d-LSTM with delay=3&d-LSTM with delay=2 &  $97.44 \pm 0.02$ &  $96.97 \pm 0.12$  \\
 d-LSTM with delay=3&d-LSTM with delay=3 &  $97.28 \pm 0.04$ &  $96.74 \pm 0.07$  \\
 d-LSTM with delay=3&d-LSTM with delay=4 &  $97.13 \pm 0.05$ &  $96.57 \pm 0.09$  \\
		\midrule
 d-LSTM with delay=5&Bi-LSTM &  $97.61 \pm 0.03$ &  $97.12 \pm 0.06$  \\ 
d-LSTM with delay=5&LSTM &  $96.64 \pm 0.06$ &  $96.08 \pm 0.08$  \\
 d-LSTM with delay=5&d-LSTM with delay=1 &  $97.46 \pm 0.02$ &  $96.96 \pm 0.13$  \\
 d-LSTM with delay=5&d-LSTM with delay=2 &  $97.41 \pm 0.06$ &  $96.87 \pm 0.06$  \\
 d-LSTM with delay=5&d-LSTM with delay=3 &  $97.36 \pm 0.05$ &  $96.82 \pm 0.07$  \\
 d-LSTM with delay=5&d-LSTM with delay=4 &  $97.15 \pm 0.05$ &  $96.51 \pm 0.07$  \\
	    \bottomrule
\end{tabular}
\end{sc}
\end{small}
\end{center}
\vskip -0.1in
\end{table*}		
\fi

\end{document}